\documentclass[10pt,journal,compsoc]{IEEEtran}

\newif\ifpeerreview
\peerreviewfalse

\usepackage[nocompress]{cite}
\usepackage{url}
\usepackage{amsmath,amssymb,amsthm,graphicx}
\usepackage{lipsum}

\usepackage[switch]{lineno}
\usepackage{subcaption}
\usepackage{tikz}

\newtheorem{theorem}{Theorem}[section]
\newtheorem{lemma}[theorem]{Lemma}

\newtheorem{example}[theorem]{Example}
\newtheorem{proposition}[theorem]{Proposition}

\newtheorem{definition}[theorem]{Definition}

\newtheorem{observation}[theorem]{Observation}

\newcommand{\E}{\mathbb{E}}
\newcommand{\N}{\mathbb{N}}

\newcommand{\R}{\mathbb{R}}

\newcommand{\dx}{\mathrm{d}}

\newcommand{\lPa}{\left (}
\newcommand{\rPa}{\right )}

\newcommand{\lBc}{\left \{}
\newcommand{\rBc}{\right \}}

\newcommand{\In}{\mathbf{1}}

\DeclareMathOperator{\tr}{tr}

\newcommand{\paperID}{36}
\title{Multiscale~Super~Resolution~without~Image~Priors}

\author{Daniel Fu, Gabby Litterio, Pedro Felzenszwalb, Rashid Zia
\IEEEcompsocitemizethanks{\IEEEcompsocthanksitem Division of Applied Mathematics (Fu) and School of Engineering \\ (Litterio, Felzenszwalb, Zia), Brown University, Providence, RI, 02906.  \protect\\
E-mail: daniel\_fu1@brown.edu, gabby\_litterio@brown.edu, \\ pedro\_felzenszwalb@brown.edu, rashid\_zia@brown.edu}}

\begin{document}

\IEEEtitleabstractindextext{%
\begin{abstract}
We address the ambiguities in the super-resolution problem under
translation. We demonstrate that combinations of low-resolution images
at different scales can be used to make the super-resolution problem
well posed. Such differences in scale can be achieved using sensors
with different pixel sizes (as demonstrated here) or by varying the
effective pixel size through changes in optical magnification (e.g.,
using a zoom lens).  We show that images acquired with pairwise
coprime pixel sizes lead to a system with a stable inverse, and
furthermore, that super-resolution images can be reconstructed
efficiently using Fourier domain techniques or iterative least squares
methods. Our mathematical analysis provides an expression for the
expected error of the least squares reconstruction for large signals
assuming i.i.d. noise that elucidates the noise-resolution
tradeoff. These results are validated through both one- and
two-dimensional experiments that leverage charge-coupled device (CCD)
hardware binning to explore reconstructions over a large range of
effective pixel sizes. Finally, two-dimensional reconstructions for a
series of targets are used to demonstrate the advantages of multiscale
super-resolution, and implications of these results for common imaging
systems are discussed.
\end{abstract}

\begin{IEEEkeywords}
Super-resolution, Multiscale Imaging, Computational Photography, Deconvolution.
\end{IEEEkeywords}
}

\ifpeerreview \linenumbers \linenumbersep 15pt\relax \author{Paper ID
  \paperID\IEEEcompsocitemizethanks{\IEEEcompsocthanksitem This paper
    is under review for ICCP 2026 and the PAMI special issue on
    computational photography. Do not distribute.}}
\markboth{Anonymous ICCP 2026 submission ID \paperID}%
         {}
         \fi
\maketitle

\IEEEraisesectionheading{
  \section{Introduction}\label{sec:introduction}
}

\IEEEPARstart{T}{he} goal of super-resolution imaging is to
reconstruct, or estimate, a high-resolution image from one or more
low-resolution measurements. Previous work has discussed the
ambiguities inherent in super resolution under translation
\cite{baker-2002,lin-2004,litterio-2025}, which in turn have inspired
the use of image priors to address missing information. For example,
\cite{baker-2002} showed the limitation of a quadratic smoothness
prior and proposed a recognition-based approach where high-resolution
features are hallucinated from low-resolution measurements by
leveraging training data. Subsequent work explored the use of large
training datasets \cite{hayes-sun2012-super,dong2015-deep} to estimate
high-resolution detail from a single low-resolution image.  While
often effective, such methods rely on assumptions about the underlying
image distribution, require large training datasets in specific
domains, and can struggle to generalize in new settings.  Recently,
\cite{litterio-2025} showed that total variation (TV) can be used to
reconstruct sparse images without the need for training data, but such
priors will fail for sufficiently complex scenes.  Other examples of
image priors that can (and have) been used for super resolution
include methods based on self-similarity
\cite{glasner_super-resolution_2009} and implicit priors based on
neural representations \cite{xie_neural_2022}.

In this paper, rather than relying on image priors, we show that the
ambiguities in super resolution under translation can be resolved
using data collected at multiple scales or effective
resolutions. Here, scale refers to the effective pixel size, which
depends on both the physical size of the image sensor's pixels as well
as the magnification of the imaging system.  In practice, scale
variation can be achieved through several mechanisms, such as changing
the optical magnification of a zoom lens, swapping intermediate lenses
in the optical system of a microscope, using sensors with multiple
pixel sizes, or using devices that have multiple independent cameras
with different effective resolutions.

\begin{figure}
  \captionsetup{justification=centering}
  \centering
  \begin{subfigure}{0.27\columnwidth}
    \centering
    \caption*{Static Image\\221~$\mu$m Pixels}
      \includegraphics[height=1\textwidth]{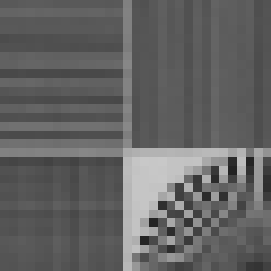} 
    \end{subfigure}
  \begin{subfigure}{0.27\columnwidth}
    \centering
  \caption*{Static Image\\234~$\mu$m Pixels}
      \includegraphics[height=1\textwidth]{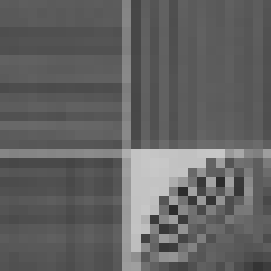} 
  \end{subfigure}
  \begin{subfigure}{0.27\columnwidth}
    \centering
    \caption*{Static Image\\247~$\mu$m Pixels}
    \includegraphics[height=1\textwidth]{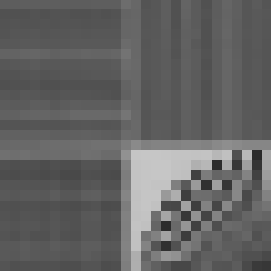} 
  \end{subfigure}

  \vspace{0.1cm}

    \begin{subfigure}{0.27\columnwidth}
    \centering
    \includegraphics[height=1\textwidth]{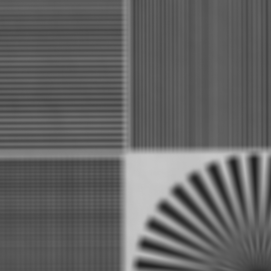} 
  \vspace{-0.55cm}
  \caption*{Interlaced Data\\221$\mu$m Pixels}
  \end{subfigure}
  \begin{subfigure}{0.27\columnwidth}
    \centering
    \includegraphics[height=1\textwidth]{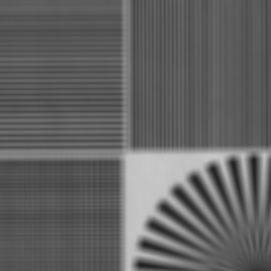} 
  \vspace{-0.55cm}
    \caption*{Interlaced Data\\234$\mu$m Pixels}
  \end{subfigure}
  \begin{subfigure}{0.27\columnwidth}
    \centering
    \includegraphics[height=1\textwidth]{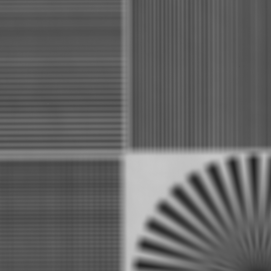}
  \vspace{-0.55cm}
  \caption*{Interlaced Data\\247$\mu$m Pixels}
  \end{subfigure}


  \begin{subfigure}{0.4\columnwidth}
    \centering
    \includegraphics[height=0.95\textwidth]{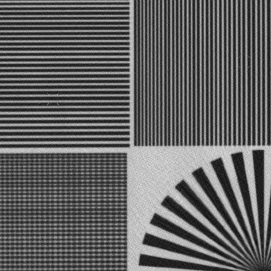} 
 \vspace{-0.1cm}
     \caption*{Multiscale Reconstruction}
  \end{subfigure}
  \hspace{0.3cm}
  \begin{subfigure}{0.4\columnwidth}
    \centering
    \includegraphics[height=0.95\textwidth]{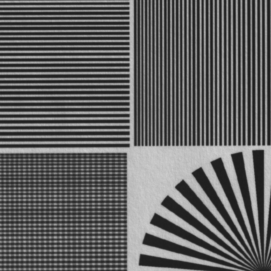} 
 \vspace{-0.1cm}
     \caption*{Static Image 13~$\mu$m Pixels}
  \end{subfigure}
  
\vspace{-0.1cm}
\captionsetup{justification=raggedright}
\caption{Multiscale super-resolution demo using 200+~$\mu$m pixels to reconstruct images comparable to 13~$\mu$m pixel.}
\label{fig:200micronDemo}
\end{figure}

Our approach is motivated by the observation that data collected at
different scales (effective pixel sizes) provide complementary
information in the Fourier domain.  While data collected with a single
low-resolution sensor and magnification cannot resolve all of the
frequency components of an image, combining measurements across
appropriately chosen scales fills in the gaps and eliminates the
ambiguity in the inverse problem.

Figure~\ref{fig:200micronDemo} provides an illustrative example of how
measurements acquired at three carefully chosen scales can enable
super-resolution in two dimensions. For this demonstration, and to
enable systematic study of super resolution under translation, we
leverage charge-coupled device (CCD) hardware binning prior to data
readout. The top row shows individual images captured via hardware
binning with effective pixel sizes of 221, 234, and 247~$\mu$m
respectively. These sizes were chosen as coprime integer multiples
(17, 18, and 19) of 13~$\mu$m. The second row shows the result of
interlacing multiple low-resolution images together. The last row
shows a reconstruction computed without image priors, using the
Fourier domain method presented below, next to a "ground-truth'' image
captured using 13~$\mu$m pixels (without binning) for
comparison. Despite using sensors with effective pixel sizes 17-19x
larger, the multiscale reconstruction based on the 200+~$\mu$m pixel
data recovers the fine structure of the 13~$\mu$m pixel image.

Most generally, for a signal $u$ in $d$-dimensions, our main
theoretical result establishes that measurements with $d+1$ pairwise
coprime integer pixel sizes are sufficient for stable reconstruction
using imaging constraints (Theorem~\ref{thm:condition_number}).
Furthermore, we provide an expression for the expected error of the
least squares reconstruction for large signals assuming i.i.d. noise
that elucidates the noise-resolution tradeoff
(Theorem~\ref{thm:cyclic_trace} and Observation~\ref{obs:tradeoff}).
These results are validated through both one- and two-dimensional
experiments that leverage CCD hardware binning to explore
reconstructions over a large range of effective pixel sizes.

We demonstrate the advantages of using multiple scales to reconstruct
two-dimensional images by empirically comparing the reconstructions
obtained with one, two, or three scales.  Implications of these
results for common imaging systems are discussed in the conclusion.

\section{Imaging Model} 
\label{sec:model}

Let $E: \R^{2} \rightarrow \R$ denote a continuous irradiance field on
the imaging plane of a camera.  An idealized high-resolution sensor
$H$ with square pixels of size $\Delta$ records a discrete image $I$
by spatially integrating $E$ over each pixel,
\[
I[i, j] = \int_{0}^{\Delta} \int_{0}^{\Delta} E(x + i\Delta, y +
j\Delta) \, \dx{x} \dx{y}.
\]
Since the irradiance field $E$ is diffraction-limited, the ``true''
resolution (resolving power) of the image $I$ is a combination of the
optical system and the pixel size. This is why the design of a
high-resolution imaging systems often begins with a pixel size
$\Delta$ that is smaller than the diffraction-limited optical
resolution.

Throughout this paper we consider the reconstruction of the image $I$
from measurements obtained using low-resolution sensors with pixel
sizes significantly larger than $\Delta$.  Thus, we are focused on
improving the sensor resolution and working in regimes where the
diffraction-limited image has higher resolution than our sensors.  In
other words, we are focused on sensor super-resolution imaging, not
super-resolution below the diffraction limit, which is a distinct
field in optical microscopy.

To define a tractable model for analysis, we consider sensors with
pixel sizes that are integer multiples of $\Delta$. This simplifies
the analysis, and it also represents a robust model for CCD cameras in
which photoelectrons accumulated within individual pixels can be
binned into a contiguous rectangular form prior to readout. Such CCD
hardware binning is regularly used to improve noise statistics for
low-signal measurement, and we will leverage binning to explore a
range of effective pixel sizes.

Suppose we have a low-resolution sensor $L$ with pixel size $k\Delta$
for a positive integer $k$.  When $L$ is properly aligned with $H$,
each low-resolution pixel measures the sum of $k^2$ high-resolution
pixels.  By translating $L$ within the imaging plane of the camera (or
equivalently by translating the camera) and then recording a series of
low-resolution images, we can measure the convolution of $I$ with a $k
\times k$ box filter $b_k$. In practice, measuring the convolution
involves capturing $k^2$ low-resolution images in a grid of sub-pixel
locations and interlacing the result as shown in
Figure~\ref{fig:interlace}.

Note that this interlacing process can be readily performed by
commercially-available cameras with pixel-shift sensor technology. As
demonstrated by \cite{litterio-2025}, it is also possible to use
camera motion with stepper motors and other low-cost translation
stages to interlace data. We have also captured the equivalent of
interlaced data by using visual landmarks (AprilTags) to localize
low-resolution images to sub-pixel locations.

\begin{figure}
  \captionsetup{justification=centering}
  \centering
  \begin{subfigure}{0.3\columnwidth}
    \centering
  \caption*{High-Res Target}
    \includegraphics[height=1\textwidth]{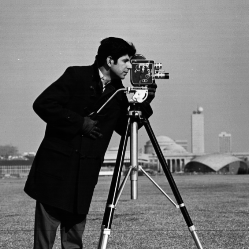} 
    \end{subfigure}
  \begin{subfigure}{0.3\columnwidth}
    \centering
  \caption*{Low-Res Images}
      \includegraphics[height=1\textwidth]{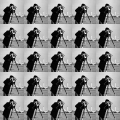} 
  \end{subfigure}
  \begin{subfigure}{0.3\columnwidth}
    \centering
    \caption*{Interlaced Data}
    \includegraphics[height=1\textwidth]{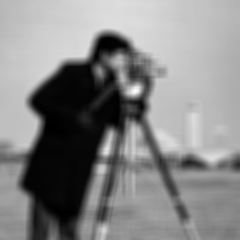} 
  \end{subfigure}
\vspace{-0.1cm}
\captionsetup{justification=raggedright}
\caption{Capturing $(I \otimes b_k)$ with a low-resolution sensor
  involves interlacing multiple low-resolution images.}
\label{fig:interlace}
\end{figure}

For the stability analysis below, we further assume the measurements
obtained with the low-resolution sensor $L$ are corrupted by noise, \(
J = (I \otimes b_k) + \epsilon.  \) For analytical convenience, we
assume that $\epsilon$ is Gaussian noise that is independent across
pixels and across measurements.  We remark that this assumption is a
mathematical simplification, and is not a claim regarding the exact
physical noise mechanism. (It is well understood that there are
several sources of noise when working with a CCD
\cite{janesick-2007}). Nevertheless, this assumption retains the core
structure of the reconstruction problem and leads to a tractable
analysis, which we will show is in good agreement with empirical data.

\subsection{Ambiguities}

As discussed in \cite{baker-2002, lin-2004, deconv24}, the problem of
super resolution under translation, which involves recovering $I$ from
$J$, is ill-posed.  Figure~\ref{fig:ill-posed} shows a simple example
of two different one-dimensional signals that are indistinguishable
after convolution with a one-dimensional box.  The nullspace of
convolution with a box is characterized in \cite{deconv24}.

\begin{figure}
    \centering
    \includegraphics[width=\columnwidth]{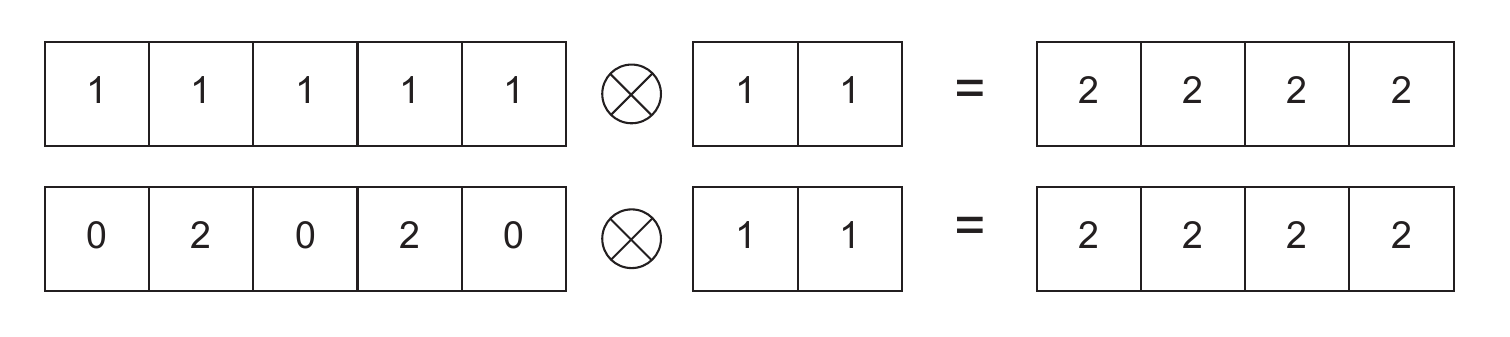}
    \caption{Two signals that are indistinguishable after convolution with a box of size two.}
    \label{fig:ill-posed}
\end{figure}

The example in Figure~\ref{fig:ill-posed} illustrates the case where
the measurements correspond to the ``valid'' convolution of a signal
of length $n$ with a box of length $k$, resulting in $n-k+1$
measurements.  We note that it is also possible to introduce a mask in
the sensor to measure the ``full'' convolution with a box, resulting
in $n+k-1$ measurements. (Such masks are indeed a standard part of
many image sensors.) The full convolution is technically invertible,
but the condition number of the system grows unbounded as the signal
length $n$ increases.  This is similar to the situation of cyclic
convolution discussed in Section~\ref{sec:stability}.

\subsection{Multiple scales (pixel sizes)}
\label{sec:multiple_scales}

To resolve the ambiguities discussed above, we propose leveraging
measurements at more than one resolution.  In particular, we will show
that a $d$-dimensional signal $u$ can be effectively recovered from
the convolution of $u$ with $d+1$ boxes of different sizes, as long as
the sizes are pairwise coprime ($a$ and $b$ are coprime if
$\gcd(a,b)=1$).

For two-dimensional images, we can estimate $I$ as long as we have
measurements with three different pixel sizes, even when they are all
relatively large. For example, using a regular camera with a zoom
lens, we can change the effective pixel size of a sensor.  In this
case, in addition to obtaining interlaced images by moving the sensor
or the camera, to obtain a magnification factor of $m$, it suffices to
use 3 zoom settings with magnifications $m/k_1, m/k_2, m/k_3$, where
$k_1, k_2, k_3$ are coprime integers.  For instance, a magnification
of 100x can be obtained from pictures taken with magnifications
$100/9\approx11.11$, $100/10=10$, $100/11\approx 9.09$. Likewise, a
discrete set of 3 effective zooms can also be implemented in a
microscope setting with different objectives and tube lens
combinations.

\section{Resolving Super-Resolution Ambiguities}
\label{sec:ambiguity}

As discussed above and illustrated in Figure~\ref{fig:ill-posed}, even
in the absence of noise, there is ambiguity in recovering a signal $u$
from $y = u \otimes b$ for a single box filter $b$.

In this section, we illustrate how this ambiguity can be resolved by
incorporating measurements using multiple effective pixel sizes.  In
particular we illustrate directly in the spatial domain how
intersection patterns of boxes of different sizes enable the recovery
of a value in $u$ from a small number of low-resolution measurements.

\subsection{1D and Bezout's identity}

Let $u$ be a one-dimensional signal of length $n$.  Let $b_k$ denote a
box of size $k$ and $y_{k} = u \otimes b_{k}$.

As a simple example, suppose that we observe both $y_{k}$ and $y_{k +
  1}$ for any positive integer $k$.  Note that $y_{k}$ gives sums of
$k$ consecutive values in $u$ while $y_{k+1}$ gives sums of $k+1$
consecutive values.  Therefore, as shown in the left-hand-side of
Figure~\ref{fig:1dboxes}, the original signal at position $a$ can be
recovered by taking differences of two measurements,
\begin{equation}
u(a) = y_{k + 1}(a) - y_{k}(a + 1).
\label{eqn:dim_one_special_case}
\end{equation}

\begin{figure}
\leavevmode\par\medskip
\begin{center}
\vspace{-0.5cm}
\begin{tikzpicture}[scale=0.4]
    \def\k{5}
    \pgfmathtruncatemacro{\N}{10}
    
    \fill[red,fill opacity=0.5] (0,2.5) rectangle (\k+1,3.5);
    \draw[line width=2pt,red] (0,2.5) rectangle (\k+1,3.5);

    \draw[<->, thick] (0,3.75) -- (\k+1,3.75);
    \node[font=\large] at (0.5*\k+0.5,4.25) {$k+1$};

    \fill[blue,fill opacity=0.5] (1,2) rectangle (\k+1,3);
    \draw[line width=2pt,blue] (1,2) rectangle (\k+1,3);
    
    \draw[<->, thick] (1,2-0.25) -- (\k+1,2-0.25);
    \node[font=\large] at (0.5*\k+1,2-0.75) {$k$};
    
    \fill[red,fill opacity=0.5] (8+0,-1+4.5) rectangle (8+\k,-2+4.5);
    \draw[line width=2pt,red] (8+0,-1+4.5) rectangle (8+\k,-2+4.5);

    \draw[<->, thick] (8+0,-.75+4.5) -- (8+\k,-.75+4.5);
    \node[font=\large] at (8+0.5*\k,-.25+4.5) {$k_2$};

    \fill[red,fill opacity=0.5] (8+\k,-1+4.5) rectangle (8+2*\k,-2+4.5);
    \draw[line width=2pt,red] (8+\k,-1+4.5) rectangle (8+2*\k,-2+4.5);

    \draw[<->, thick] (8+\k,-.75+4.5) -- (8+2*\k,-.75+4.5);
    \node[font=\large] at (8+1.5*\k,-.25+4.5) {$k_2$};
    
    \fill[blue,fill opacity=0.5] (8+1,-1.5+4.5) rectangle (8+\k-1,-2.5+4.5);
    \draw[line width=2pt,blue] (8+1,-1.5+4.5) rectangle (8+\k-1,-2.5+4.5);

    \draw[<->, thick] (8+1,-2.5-0.25+4.5) -- (8+\k-1,-2.5-0.25+4.5);
    \node[font=\large] at (8+0.5*\k,-2.5-0.75+4.5-.1) {$k_1$};

    \fill[blue,fill opacity=0.5] (8+\k-1,-1.5+4.5) rectangle (8+2*\k-3,-2.5+4.5);
    \draw[line width=2pt,blue] (8+\k-1,-1.5+4.5) rectangle (8+2*\k-3,-2.5+4.5);

    \draw[<->, thick] (8+\k-1,-2.5-0.25+4.5) -- (8+2*\k-3,-2.5-0.25+4.5);
    \node[font=\large] at (8+0.5*\k+\k-2,-2.5-0.75+4.5-.1) {$k_1$};

    \fill[blue,fill opacity=0.5] (8+2*\k-3,-1.5+4.5) rectangle (8+3*\k-5,-2.5+4.5);
    \draw[line width=2pt,blue] (8+2*\k-3,-1.5+4.5) rectangle (8+3*\k-5,-2.5+4.5);

    \draw[<->, thick] (8+2*\k-3,-2.5-0.25+4.5) -- (8+3*\k-5,-2.5-0.25+4.5);
    \node[font=\large] at (8+0.5*\k+2*\k-4,-2.5-0.75+4.5-.1) {$k_1$};
\end{tikzpicture}
\end{center}
\vspace{-0.5cm}
\caption{Examples showing how single-pixel values can be calculated by subtraction of different box sums in 1D.}
\label{fig:1dboxes}
\end{figure}
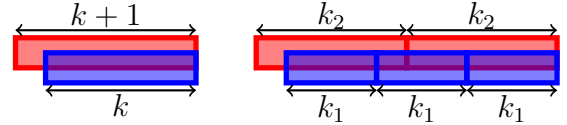

Thus, together the pair $y_{k}$ and $y_{k+1}$ contains enough
information to recover the original signal.  Moreover,
Eq.~\eqref{eqn:dim_one_special_case} is local and only two
measurements are combined without scaling their magnitudes, so this
approach leads to a stable reconstruction in the sense that
measurement errors are not significantly amplified.

More generally, any two coprime box sizes suffice to obtain a stable
reconstruction of $u$ in one dimension. Let $k_1$ and $k_2$ be two
coprime positive integers.  By Bezout's identity (see,
e.g. \cite{jones-1998}), and relabeling if necessary, there are
positive integers $m_{1}, m_{2}$ such that,
\[
    m_2k_2 = m_1k_1 + 1. 
\]

By summing over consecutive windows of size $k_j$, we can obtain
$y_{m_jk_j}$ from $y_{k_j}$.  We can then recover $u$ using
Eq.~\eqref{eqn:dim_one_special_case} with $k=m_1k_1$ and
$k+1=m_2k_2$. Further, we can choose $m_{1} \leq k_{2}$ and $m_{2}
\leq k_{1}$, so that this reconstruction is local and stable.
Figure~\ref{fig:1dboxes} illustrates examples where $m_1=3$ and
$m_2=2$ (when $k_2=k_1+2$).

\subsection{Invertibility}

The following result characterizes the invertibility when sensors
measure the valid part of the convolution of a signal with a box,
which is a subset of the full or cyclic convolution.  Invertibility in
this case can be used to derive local reconstruction methods.

\begin{theorem}
Suppose $n \geq k_{1} + k_{2} - 1$ and let $u \otimes b$ denote the
valid part of the convolution.  Then the linear map,
\begin{equation}
    u \rightarrow (u \otimes b_{k_{1}}, u \otimes b_{k_{2}})
\label{eqn:linear_map_dim_one}
\end{equation}
is invertible if and only if $\gcd(k_{1}, k_{2}) = 1$.
\end{theorem}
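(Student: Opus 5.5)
Throughout, "invertible" means injective; the map is never square, since it sends $\R^n$ to $\R^{2n-k_1-k_2+2}$. The plan is to prove injectivity by studying the kernel with polynomials. Identify $u$ with the polynomial $U(x)=\sum_{i=0}^{n-1}u_i x^i$, and let $B_k(x)=1+x+\dots+x^{k-1}=(x^k-1)/(x-1)$. The valid convolution $u\otimes b_k$ consists of the coefficients of $x^{k-1},\dots,x^{n-1}$ in $U B_k$.

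\textbf{Sufficiency ($\gcd=1$).} I would give an explicit reconstruction based on the Bezout construction from the previous subsection. Relabel so that $m_2k_2=m_1k_1+1$ with $1\le m_1\le k_2$ and $1\le m_2\le k_1$. Then $y_{m_jk_j}(a)$ is a sum of $m_j$ entries of $y_{k_j}$ spaced by $k_j$. Eq.~\eqref{eqn:dim_one_special_case} gives $u(a)=y_{m_2k_2}(a)-y_{m_1k_1}(a+1)$, and this is available whenever the window $[a,a+m_2k_2-1]$ fits inside $[0,n-1]$.

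A mirrored identity handles the other positions: $u(a)=y_{m_2k_2}(a-m_2k_2+1)-y_{m_1k_1}(a-m_2k_2+1)$. It reads off the right end of the window and is available whenever $a\ge m_2k_2-1$. Together the two identities cover every $a$ as long as $n\ge m_2k_2$.

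The main obstacle is that $m_2k_2$ can be as large as roughly $k_1k_2$, which may exceed $k_1+k_2-1$. So the local formula alone does not reach the sharp hypothesis, and I would switch to a kernel argument.

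\textbf{Kernel argument.} Suppose both valid convolutions of $u$ vanish. For $k=k_1$, a vanishing $u\otimes b_k$ means $u_{i+k}=u_i$, so $u$ is $k_1$-periodic on $[0,n-1]$. Likewise $u$ is $k_2$-periodic. Consider the graph on $\{0,\dots,n-1\}$ with edges $i\sim i+k_1$ and $i\sim i+k_2$. By the Fine--Wilf theorem, when $n\ge k_1+k_2-\gcd(k_1,k_2)$ this graph is connected exactly when $\gcd=1$. In that case $u$ is constant, say $u\equiv c$, and then $u\otimes b_{k_1}\equiv k_1c=0$ forces $c=0$. This gives injectivity under exactly $n\ge k_1+k_2-1$.

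I would prove the connectivity claim by the standard Euclidean-style induction. Assume $k_1<k_2$. Reduce the pair $(k_1,k_2)$ with length $n$ to the pair $(k_1,k_2-k_1)$ with length $n-k_1$ by restricting to positions $<n-k_1$ and using the combined shift $i\mapsto i+k_2-k_1$. This step is the part to check carefully.

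\textbf{Necessity.} If $g=\gcd(k_1,k_2)>1$, take $u$ to be $g$-periodic with zero sum over each period, for example $u_i=1$ for $i\equiv0$, $u_i=-1$ for $i\equiv1 \pmod g$, and $u_i=0$ otherwise. Every window of length $k_j$ is a union of $k_j/g$ full periods, so it sums to zero. Then $u\ne0$ lies in the kernel of both convolutions, and the map is not injective.
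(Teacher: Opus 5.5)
Your proof is correct and is essentially the paper's argument: necessity via a nonzero $g$-periodic signal with zero sum over each period, and sufficiency by showing that a kernel element is both $k_1$- and $k_2$-periodic (you derive this directly from $y(i+1)-y(i)=u_{i+k}-u_i$, where the paper cites a prior result), then invoking Fine--Wilf to force a constant, which must be zero. The Bezout detour is unnecessary and can be dropped; as written its coverage condition is also off, since the two local identities cover every position only when $n\ge 2m_2k_2-2$, not $n\ge m_2k_2$.
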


\begin{proof}
Suppose first that $k_{1}$ and $k_{2}$ are not coprime with $r =
\gcd(k_{1}, k_{2})$. Any non-zero $r$-periodic signal $x$ with
\[
    \sum_{a = 0}^{r - 1} x(a) = 0
\]
lies in the kernel of the map~\eqref{eqn:linear_map_dim_one}. Since $r
\geq 2$, the set of such signals is non-trivial
and~\eqref{eqn:linear_map_dim_one} cannot be invertible.

Now assume that $k_{1}$ and $k_{2}$ are coprime. By \cite[Proposition
  1]{deconv24}, any element $x$ in the kernel of the
map~\eqref{eqn:linear_map_dim_one} is necessarily both
$k_{1}$-periodic and $k_{2}$-periodic. Hence, the Fine-Wilf theorem
implies that $x$ is $1$-periodic and therefore constant.  But then, $x
\otimes b_k = 0$ implies $x=0$.  It follows that the kernel
of~\eqref{eqn:linear_map_dim_one} is trivial, i.e. the map is
invertible.
\end{proof}

\subsection{2D and box packing} \label{subsec:2D_box_packing}

The two-dimensional case follows the same basic principle as the
one-dimensional case, but with greater ambiguity.

Indeed, two box sizes do not suffice for exact recovery in dimension
two. This is because the two-dimensional box is separable and
decomposes as the product of horizontal and vertical one-dimensional
boxes. As a result, one can construct multiple signals which are
indistinguishable to two coprime box sizes simultaneously.

Let $x_1$ and $x_2$ be one-dimensional signals in the null space of
convolution with a box of size $k_1$ and $k_2$ respectively.  Then,
the outer product $x_{1}x_{2}^{\intercal}$, which we interpret as a
two-dimensional signal, lies in the null space of convolution with
two-dimensional boxes of size $k_{1}$ \emph{and} $k_2$.  The same is
true for linear combinations of such outer products.

\begin{example}
Let $x_{1} = (1,-1,1,-1)^{\intercal}$, $x_{2} = (1,-1,0,1)^{\intercal}$ and consider
\[
    x = \In + x_{1} x_{2}^{\intercal} = \begin{pmatrix}
        2 & 0 & 1 & 2 \\
        0 & 2 & 1 & 0 \\
        2 & 0 & 1 & 2 \\
        0 & 2 & 1 & 0
    \end{pmatrix}
\]
Every $2 \times 2$ window of $x$ sums to $4$ and every $3 \times 3$
window sums to $9$ so that, after convolving with boxes of size $2$
and $3$, this signal is indistinguishable from the constant $1$'s
signal.
\end{example}
Thus, in two-dimensions, at least three box sizes are required to
reconstruct a signal.

Figure~\ref{fig:2dboxes} illustrates a key example where three box
sizes are in fact sufficient.  In this case we reconstruct $u$ from
$y_k$, $y_{k+1}$, and $y_{2k+1}$ for any positive integer $k$.  The
packing of boxes shown in Figure~\ref{fig:2dboxes} illustrates how one
individual pixel can be recovered from sums in boxes of size $k$,
$k+1$ and $2k+1$.  This leads to the expression,
\begin{align}
u(a,b) & = y_{k}(a+1,b-k) + y_{k}(a-k,b+1) + y_{k+1}(a,b) \notag \\ & + y_{k+1}(a-k,b-k) - y_{2k+1}(a-k,b-k).
\label{eqn:dim_two_special_case}
\end{align}

As in the one-dimensional case, this combines a small number of
measurements without scaling magnitudes and leads to a stable local
reconstruction.

\begin{figure}
\label{ex:box_sufficiency_in_dim_two}
\leavevmode\par\medskip
\begin{center}
\vspace{-0.5cm}
\begin{tikzpicture}[scale=0.5]
    \def\k{3}
    \pgfmathtruncatemacro{\N}{2*\k+1}

    \draw[line width=4pt,color=blue] (0,0) rectangle (\N,\N);

    \fill[red!35] (0,0) rectangle (\k,\k);
    \draw[line width=1pt] (0,0) rectangle (\k,\k);

    \fill[red!35] ({\k+1},{\k+1}) rectangle (\N,\N);
    \draw[line width=1pt] ({\k+1},{\k+1}) rectangle (\N,\N);

    \fill[green!15] (0,\k) rectangle ({\k+1},\N);
    \draw[line width=1pt] (0,\k) rectangle ({\k+1},\N);

    \fill[green!15] (\k,0) rectangle (\N,{\k+1});
    \draw[line width=1pt] (\k,0) rectangle (\N,{\k+1});

    \fill[black!55] (\k,\k) rectangle ({\k+1},{\k+1});
    \draw[line width=1pt] (\k,\k) rectangle ({\k+1},{\k+1});

    \node[font=\Large] at ({0.5*\k},{0.5*\k}) {$A$};
    \node[font=\Large] at ({\k+1+0.5*\k},{\k+1+0.5*\k}) {$B$};
    \node[font=\Large] at ({0.5*(\k+1)},{\k+0.5*(\k+1)}) {$C$};
    \node[font=\Large] at ({\k+0.5*(\k+1)},{0.5*(\k+1)}) {$D$};
    \node[font=\Large, text=white] at ({\k+0.5},{\k+0.5}) {$F$};

    \node[font=\Large,color=blue] at ({\N+0.6},{0.5*\N}) {$E$};

    \draw[<->, thick] (-0.5,0) -- (-0.5,\N);
    \node[font=\large, rotate=90] at (-1,{0.5*\N}) {$2k+1$};

    \draw[<->, thick] ({\k+1},\N+0.45) -- (\N,\N+0.45);
    \node[font=\large] at ({\k+1+0.5*\k},\N+0.95) {$k$};

    \draw[<->, thick] (\k,-0.5) -- (\N,-0.5);
    \node[font=\large] at ({\k+0.5*(\k+1)},-1) {$k+1$};

\end{tikzpicture}
\end{center}
\vspace{-0.5cm}
\caption{A 2D example showing how single-pixel values can be recovered
  from box sums using $F = A + B + C + D - E$. }
\label{fig:2dboxes}
\end{figure}
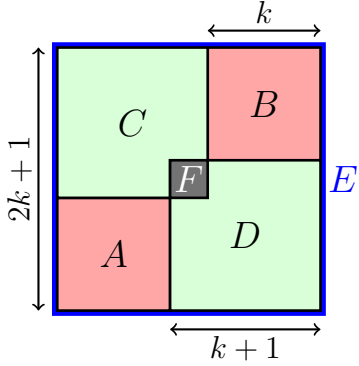

To extend the above to the general case when the box sizes $k_{1},
k_{2}, k_{3}$ are pairwise coprime, observe that, after relabeling if
necessary so that $k_{3}$ is odd, the Chinese remainder theorem
guarantees the existence of a positive integer $k$ and a triplet
$(m_1, m_2, m_3) \in \N^{3}$ such that,
\[
	k= m_1k_{1}, \quad k + 1 = m_2k_{2}, \quad 2k + 1 = m_3k_{3}
\]
Now note that we can recover $y_{m_jk_j}$ from $y_{k_j}$.  This
reduces the problem to the setting illustrated in
Figure~\ref{fig:2dboxes}, where we can use
Eq.~\eqref{eqn:dim_two_special_case} for reconstruction.  Moreover,
the coefficients may be chosen so that $\max(m_{1}, m_{2}, m_{3}) \leq
k_{1}k_{2}k_{3}$. Hence, this method again yields a local
reconstruction algorithm.

\section{Least Squares Reconstruction}
\label{sec:lsqr}

Let $u$ be a finite $d$-dimensional signal (the target) and $z =
\{z_j\}_{j=1}^s$ be a collection of measurements acquired with pixel
sizes $\{k_j\}_{j=1}^s$ and corrupted with noise,
\[
z_j = u \otimes b_{k_j} + \epsilon_j.
\]

Under the i.i.d. Gaussian noise model, the maximum likelihood estimate
of $u$ given the measurements $z$ is the solution to a linear least
squares problem,
\begin{equation}
\hat{u} = \arg\min_x \sum_{j=1}^s ||(x \otimes b_{k_j}) - z_{k_j}||^2.
\end{equation}

Let $T$ be the linear map 
\begin{equation}
T(u) = (u \otimes b_{k_1}, \ldots, u \otimes b_{k_s}).
\label{eqn:T}
\end{equation}
When $T$ is invertible, the estimate $\hat{u}$ is unbiased, and the expected square error
of the solution is (see, e.g. \cite{bjorck-1996}),
\begin{equation}
    \E[\|\hat{u} - u\|^{2}]= \sigma^{2} \tr((T'T)^{-1}).
\label{eqn:lsqerror}
\end{equation}
Here $\sigma^2$ is the variance of the per-pixel sensor noise.

For $d$-dimensional signals, the results in
Section~\ref{sec:stability} show that $d+1$ box sizes that are
pairwise coprime lead to invertible maps $T$ and a stable least
squares reconstruction.

We also consider the use of Tikhonov regularization with a regularization parameter $\lambda$ 
to compute an estimate,
\begin{equation}
\hat{u} = \arg\min_x \sum_{j=1}^s ||(x \otimes b_{k_j}) - z_{k_j}||^2 + \lambda||x||^2.
\end{equation}

Below we discuss how we solve the least squares problem efficiently,
and then in the subsequent section, we consider stability of
(unregularized) least squares reconstruction using different box
sizes.

\subsection{Fourier domain reconstruction}
\label{sec:FFT}

In the cyclic convolution setting, the signal can be reconstructed in
the Fourier domain.  This leads to a very fast method using the FFT
and inverse FFT.  In practice, the method can be directly applied to
measurements collected with a masked sensor or by padding the boundary
of measurements obtained with a regular (non-masked) sensor.

\begin{proposition}
    With cyclic convolution the regularized least squares solution $\hat{u}$ is given by,
    \begin{equation}
    \overline{\hat{u}}(\omega) = \frac{\sum_{j=1}^s \overline{b_{k_j}}(\omega)^{*} \overline{z_j}(\omega)}{\lambda + \sum_{j=1}^s |\overline{b_{k_j}}(\omega)|^{2}}.
    \label{eqn:filter}
    \end{equation}
\label{prop:finv}
\end{proposition}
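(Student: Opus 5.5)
The plan is to diagonalize the whole problem with the discrete Fourier transform, so that the regularized least squares objective splits into independent scalar problems, one per frequency $\omega$. Write $\overline{v}$ for the DFT of a (cyclic) signal $v$ on the $d$-dimensional torus, with $N$ points in total. Two standard facts are all that is needed. First, the convolution theorem: with cyclic convolution, $\overline{x \otimes b_{k_j}}(\omega) = \overline{b_{k_j}}(\omega)\,\overline{x}(\omega)$, where $b_{k_j}$ is zero-padded to the full signal size. Second, Parseval's identity: $\|v\|^2 = \frac{1}{N}\sum_\omega |\overline{v}(\omega)|^2$. The normalization constant is common to every term and so does not affect the minimizer. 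Unitary normalization can also be used, provided the convolution theorem is stated consistently with it.

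Applying both facts, the objective becomes
\[
\frac{1}{N}\sum_{\omega}\Big(\sum_{j=1}^s |\overline{b_{k_j}}(\omega)\overline{x}(\omega) - \overline{z_j}(\omega)|^2 + \lambda |\overline{x}(\omega)|^2\Big).
\]
The DFT is a bijection between real signals and Hermitian-symmetric spectra. I will minimize over all complex spectra. The resulting minimizer automatically satisfies $\overline{\hat u}(-\omega) = \overline{\hat u}(\omega)^*$, because $b_{k_j}$ and $z_j$ are real. Hence the unconstrained optimum is also the real optimum, and the problem separates frequency by frequency.

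For each fixed $\omega$, I minimize the scalar function $f(c) = \sum_j |\beta_j c - \zeta_j|^2 + \lambda |c|^2$ over $c \in \mathbb{C}$, where $\beta_j = \overline{b_{k_j}}(\omega)$ and $\zeta_j = \overline{z_j}(\omega)$. Expanding gives
\[
f(c) = \Big(\lambda + \sum_j |\beta_j|^2\Big)|c|^2 - 2\,\mathrm{Re}\Big(c^* \sum_j \beta_j^* \zeta_j\Big) + \mathrm{const}.
\]
This is a strictly convex quadratic whenever $\lambda + \sum_j |\beta_j|^2 > 0$. Completing the square, or equivalently setting the Wirtinger derivative $\partial f/\partial c^*$ to zero, gives the unique minimizer $c = \sum_j \beta_j^*\zeta_j / (\lambda + \sum_j|\beta_j|^2)$, which is exactly Eq.~\eqref{eqn:filter}.

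The only real obstacle is making sure the denominator never vanishes. If $\lambda > 0$ this is immediate. If $\lambda = 0$, the denominator vanishes only at frequencies where every $\overline{b_{k_j}}(\omega)$ is zero, and there $\hat u$ is not unique, i.e.\ $T$ is not injective. So for the unregularized case I would state the formula under the hypothesis that $T$ is invertible. That hypothesis is equivalent to $\sum_j |\overline{b_{k_j}}(\omega)|^2 > 0$ for all $\omega$, which is the condition examined in Section~\ref{sec:stability}. The remaining steps are routine.
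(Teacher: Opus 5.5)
Your proof is correct and follows the same route as the paper: Parseval's theorem plus the convolution theorem decouple the objective into independent scalar quadratics, one per frequency, and each is minimized in closed form. Your extra checks fill in details the paper's proof leaves implicit: the normalization constant, the Hermitian symmetry showing the complex minimizer is real, and the nonvanishing of the denominator when $\lambda = 0$, which holds exactly when $T$ is injective.
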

\begin{proof}
By applying Parseval's theorem and the convolution theorem to the
least squares objective we can express the squared error as,
\begin{align*}
& \sum_{j=1}^s ||(x \otimes b_{k_j}) - z_{k_j}||^2 + \lambda||x||^2 \\ & = \sum_{j=1}^s ||\overline{x} \odot \overline{b_{k_j}} - \overline{z_{k_j}}||^2 + \lambda||\overline{x}||^2 \\
& = \sum_{\omega} \sum_{j=1}^s |\overline{x}(\omega) \overline{b_{k_j}}(\omega) - \overline{z_{k_j}}(\omega)|^2 + \lambda|\overline{x}(\omega)|^2.
\end{align*}
This leads to a separate single-variable quadratic minimization
problem for each frequency component.  A closed form solution for this
quadratic problem can be obtained by setting the derivative to zero,
leading to the statement of the proposition.
\end{proof}

Using Proposition~\ref{prop:finv}, we can see that $\hat{u}$ is a sum
of $s$ different signals $\hat{u}_i$ each estimated from measurements
at a single scale $z_i$ by convolution with a filter $h_i$,
\begin{align*}
&\hat{u} = \sum_{j=1}^s \hat{u}_j, \\
&\hat{u}_j = z_j \otimes h_j, \\
&\overline{h_j}(\omega) = \frac{\overline{b_{k_j}}(\omega)^*}{\lambda + \sum_{j'=1}^s|\overline{b_{k_{j'}}}(\omega)|^{2}}.
\end{align*}

\subsection{Iterative method with fast box-sums}

We have also implemented a fast reconstruction algorithm for the
regularized least squares problem using the LSQR solver
\cite{paige-1982}. The approach can be used to solve the
reconstruction from valid convolution measurements directly, without
padding the data or masking the sensor.

The LSQR method involves repeated applications of the map $T$ and of
the transpose $T'$.  Since $T$ is defined using convolutions with
boxes, both $T$ and $T'$ can be implemented efficiently using integral
images (summed-area tables) or other fast methods for box filtering
(see, e.g.,\cite{mcdonnell-1981}, \cite{crow-1984}).

Each application of $T$ or $T'$ takes time $O(NS)$, where $N$ is the
number of pixels in the signal being reconstructed and $S$ is the
number of scales used for the reconstruction.  In particular, the
running time is linear in $N$ and independent of the box sizes used
for the measurements.

\section{Stability Analysis}
\label{sec:stability}

In this section we study the general case of reconstructing
$d$-dimensional signals.  Key to the analysis is the notion of coprime
(or relatively prime) integers.  Recall that $a$ and $b$ are coprime
if $\gcd(a,b)=1$.

Our main result is that an imaging system with $d+1$ pixel sizes (all
greater than one) has a stable pseudoinverse exactly when the pixel
sizes are pairwise coprime (Theorem~\ref{thm:condition_number}).
Furthermore, we derive an asymptotic formula for the expected squared
error of the least squares reconstruction
(Theorem~\ref{thm:cyclic_trace}).

The key to proving both theorems is to understand the singular values
of the map $T$ in Eq.~\eqref{eqn:T}.  To analyze these singular values
we consider the particular case of cyclic convolutions, first with a
single box (Section~\ref{sec:onebox}) and then with multiple boxes
(Section~\ref{sec:multiplebox}).

Intuitively, we show that, for $d$-dimensional signals, the zeros of
the Fourier transforms of $d+1$ boxes with pairwise coprime sizes do
not have a common intersection.

\subsection{Convolution with one box}
\label{sec:onebox}

For $k \in \N$, let $b_{k} \in \R^{[n]^{d}}$ denote the
$d$-dimensional box filter of size $k$,
\[
    b_{k}(t_{1}, \hdots, t_{d}) = \begin{cases}
        1/k^{d} & \text{if $t_{\ell} \in [k]$ for all $\ell = 1, \hdots, d$,} \\
        0 & \text{otherwise.}
    \end{cases}
\]
In this section, we normalize the box filter to sum to $1$. This
matches an experimental setup where exposure times are varied
according to the pixel size to avoid saturation.

\begin{figure}[b]
    \centering
    \includegraphics[width=0.8\columnwidth]{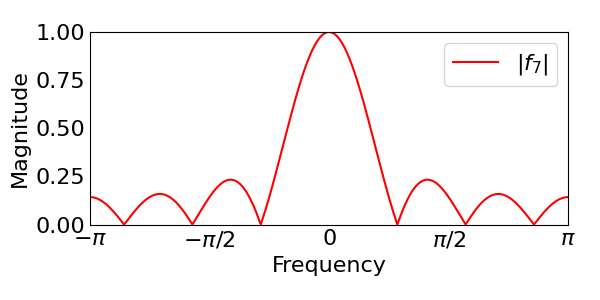}
    \caption{The magnitude of the periodic sinc $f_{7}$.  The discrete
      Fourier transform (DFT) of a box of width $7$ is obtained as a
      discrete sampling of this function.  The maximum of $|f_k|$
      occurs at $\omega = 0$ with value equal to $1$, and the zeros
      occur at non-zero multiples of $2\pi/k$.}
    \label{fig:periodic_sinc_box7}
\end{figure}

Note that the $d$-dimensional box filter is separable, and factors as
the product of $d$ one-dimensional filters,
\begin{equation}
    b_{k}(t_{1}, \hdots, t_{d}) = \prod_{\ell = 1}^{d} b_{k}(t_{\ell}).
\end{equation}

The singular values of the map $u \rightarrow u \otimes b_k$ are
determined by the Fourier transform of $b_k$.  To characterize the
Fourier transform we define the periodic sinc.

\begin{definition}[Periodic sinc]
\[
    f_{k}(\omega) = \frac{\sin(\frac{k\omega}{2})}{k\sin(\frac{\omega}{2})}
\]
for $\omega \in (0, 2\pi)$. An application of L'H\^{o}pital's rule
shows that $f_{k}$ extends continuously to a function on $[0, 2\pi]$
with
\[
    f_{k}(0) = f_{k}(2\pi) = 1.
\]
In what follows, we work with this continuous extension.
\end{definition}

Note that $f_{k}$ has exactly $k-1$ zeros at non-zero integer
multiples of $\frac{2\pi}{k}$.  See
Figure~\ref{fig:periodic_sinc_box7} for an illustration.

The following identity will be useful to compute the discrete Fourier
transform of a box,
\begin{equation}
    f_{k}(\omega) = \frac{e^{i\frac{1}{2}(k - 1) \omega}}{k} \sum_{t \in [k]} e^{-it \omega}.
\label{eqn:identity_for_f}
\end{equation}
The identity follows by expanding the $\sin$ function in terms of
exponentials and using the geometric series formula.

\begin{lemma}[DFT]
\label{lem:fourier_transform_of_box}
For any $(\omega_{1}, \hdots, \omega_{d}) \in \frac{2\pi}{n}[n]^{d}$, 
\begin{equation}
    \overline{b_{k}}(\omega_{1}, \hdots, \omega_{d}) = \prod_{\ell = 1}^{d} e^{-i\frac{1}{2}(k - 1) \omega_{\ell}} f_{k}(\omega_{\ell}).
\end{equation}
\end{lemma}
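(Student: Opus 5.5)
The plan is to compute the DFT of $b_k$ directly from the definition and reduce it to the identity~\eqref{eqn:identity_for_f}. We use the convention
\[
\overline{b_k}(\omega_1,\hdots,\omega_d) = \sum_{t \in [n]^d} b_k(t_1,\hdots,t_d)\, e^{-i\sum_{\ell} t_\ell \omega_\ell},
\]
with $\omega_\ell \in \frac{2\pi}{n}[n]$. We implicitly assume $k \leq n$, so that the box fits inside the signal domain.

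The first step is to exploit separability. The box factors as $b_k(t_1,\hdots,t_d) = \prod_\ell b_k(t_\ell)$, where each one-dimensional factor is $1/k$ on $[k]$, so that the product of the $d$ factors is $1/k^d$. The exponential also factors as $\prod_\ell e^{-it_\ell\omega_\ell}$. The $d$-fold sum over $[n]^d$ is therefore a product of $d$ one-dimensional sums:
\[
\overline{b_k}(\omega) = \prod_{\ell=1}^d \frac{1}{k}\sum_{t\in[k]} e^{-it\omega_\ell}.
\]
The terms with $t_\ell \notin [k]$ vanish, which is where $k\le n$ is used.

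The second step applies~\eqref{eqn:identity_for_f} to each factor. Rearranged, it reads $\frac{1}{k}\sum_{t\in[k]} e^{-it\omega} = e^{-i\frac12(k-1)\omega} f_k(\omega)$, and substituting this into each factor gives the stated formula.

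The only point needing care is that~\eqref{eqn:identity_for_f} holds at $\omega_\ell = 0$, where $f_k$ is defined by continuous extension. This case is handled by continuity, or directly: both sides equal $1$ there. The grid $\frac{2\pi}{n}[n]$ contains no other point in $\{0, 2\pi\}$, so no further endpoint cases arise. I would also briefly verify~\eqref{eqn:identity_for_f} itself. The geometric series gives $\sum_{t\in[k]} e^{-it\omega} = \frac{1-e^{-ik\omega}}{1-e^{-i\omega}}$. Factoring out the half-angle exponentials, this equals $e^{-i\frac12(k-1)\omega}\frac{\sin(k\omega/2)}{\sin(\omega/2)}$, and dividing by $k$ gives the identity. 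There is no real obstacle beyond bookkeeping of the DFT sign convention.
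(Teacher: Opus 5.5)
Your proposal is correct and follows the paper's proof essentially step for step: you factor the $d$-dimensional transform by separability, apply the identity~\eqref{eqn:identity_for_f} to each one-dimensional factor at nonzero frequencies, and check $\omega_\ell = 0$ directly. Your geometric-series verification of that identity and your explicit note that $k \le n$ are sound details that the paper leaves implicit.
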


\begin{proof}
By separability of the $d$-dimensional box, the $d$-dimensional Fourier transform factors: 
\begin{equation}
    \overline{b_{k}}(\omega_{1}, \hdots, \omega_{d}) = \prod_{\ell = 1}^{d} \overline{b_{k}}(\omega_{\ell})
\end{equation}
and so the problem reduces to computing the Fourier transform of the one-dimensional box.

For the one-dimensional case, let $\omega \in \frac{2\pi}{n}[n]$. If
$\omega \neq 0$, starting from the definition of the Fourier
transform, the identity~\eqref{eqn:identity_for_f} yields,
\[
    \overline{b_{k}}(\omega) = \sum_{t \in [n]} e^{-it \omega} b_{k}(t) = \frac{1}{k} \sum_{t \in [k]} e^{-it \omega} = e^{-i\frac{1}{2}(k - 1) \omega} f_{k}(\omega)
\]
Finally, for $\omega = 0$, we can check directly that the left hand
side equals the right hand side.  This proves the claim.
\end{proof}

As a result of Lemma~\ref{lem:fourier_transform_of_box}, the zeros of
$\overline{b_{k}}$ are determined by the zeros of $f_{k}$ and the
discrete frequency sampling in the Fourier transform.  When the
discrete sampling avoids the zeros of $f_{k}$, the Fourier transform
does not vanish, but may still attain arbitrarily small values,
leading to poor conditioning of the imaging system.

The following results make these observations precise.  We note that
$\overline{b_k}$ has no zeros if $k$ and $n$ (the signal length) are
relatively prime.  Nonetheless, for any $k>1$, the number of values in
the Fourier transform of $b_k$ that are \emph{close to zero} is
non-trivial when $n$ is large.

\begin{proposition}
If $\gcd(k, n) = 1$, then $\overline{b_{k}}$ has no zeros.
\end{proposition}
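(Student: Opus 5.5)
By Lemma~\ref{lem:fourier_transform_of_box}, at a frequency $(\omega_1,\hdots,\omega_d) \in \frac{2\pi}{n}[n]^d$ the transform $\overline{b_k}$ is a product of unit-modulus phase factors $e^{-i\frac12(k-1)\omega_\ell}$ and the values $f_k(\omega_\ell)$. The phase factors never vanish, so $\overline{b_k}$ has a zero exactly when $f_k(\omega_\ell) = 0$ for some coordinate $\ell$. The plan is therefore to reduce to a one-dimensional statement: for $\omega = 2\pi m/n$ with $m \in [n]$, show that $f_k(\omega) \neq 0$ whenever $\gcd(k,n)=1$.

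For the one-dimensional claim I will use the characterization of the zeros of $f_k$ recorded after its definition. On $[0,2\pi]$, the function $f_k$ vanishes only at the nonzero integer multiples $2\pi j/k$ with $1 \le j \le k-1$, and $f_k(0)=1$. This follows from the formula: the numerator $\sin(k\omega/2)$ vanishes exactly when $\omega \in \frac{2\pi}{k}\mathbb{Z}$, while the denominator $k\sin(\omega/2)$ is nonzero on $(0,2\pi)$.

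It remains to show that a grid point $2\pi m/n$ with $1 \le m \le n-1$ can never equal $2\pi j/k$ with $1\le j\le k-1$. Suppose it did. Then $mk = jn$, so $n \mid mk$. Since $\gcd(k,n)=1$, this forces $n \mid m$, which is impossible for $0<m<n$. The case $m=0$ gives $f_k(0)=1\neq 0$. Hence no sampled frequency hits a zero of $f_k$ in any coordinate, and the product is nonzero.

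The only point needing care, and the one I expect to be the main (though mild) obstacle, is the $d$-dimensional reduction. Each coordinate $\omega_\ell$ independently ranges over the same grid $\frac{2\pi}{n}[n]$, so the one-dimensional argument applies to every factor. I also need to make sure the Lemma's formula is valid at $\omega=0$, which uses the continuous extension of $f_k$; the Lemma's proof already checks this case.
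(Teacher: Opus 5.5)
Your proposal is correct and follows the paper's proof. Like the paper, you reduce to one dimension via the factorization in Lemma~\ref{lem:fourier_transform_of_box}, identify the zero set $Z_k = \{2\pi j/k : 1 \le j \le k-1\}$ of $f_k$, and check that the grid $\frac{2\pi}{n}[n]$ misses $Z_k$ when $\gcd(k,n)=1$. Your divisibility step ($mk = jn$ forces $n \mid m$, impossible for $0<m<n$) just makes explicit what the paper leaves as ``we can see.''
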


\begin{proof}
By separability, it suffices to treat the one-dimensional case. Let
$\omega \in \frac{2\pi}{n}[n]$. By
Lemma~\ref{lem:fourier_transform_of_box}, $\overline{b_{k}}(\omega) =
0$ if and only if $f_{k}(\omega) = 0$. The zeros of $f_{k}$ are
precisely the set
\begin{equation}
    Z_{k} = \lBc \tfrac{2\pi}{k}m: 1 \leq m \leq k - 1 \rBc.
\label{eqn:one_dim_zero_set}
\end{equation}
Under the coprime condition $\gcd(n, k) = 1$, we can see that the
discrete frequencies $\frac{2\pi}{n}[n]$ do not intersect with
$Z_{k}$.
\end{proof}

In the subsequent proofs, we make use of the following observation,
\begin{equation}
    \max_{\omega \in [0, 2\pi]} |f_{k}(\omega)| = 1.
\label{eqn:trivial_bound_on_f}
\end{equation}
This follows by taking absolute value on both sides
of~\eqref{eqn:identity_for_f} and noting that the maximum is attained
at $\omega = 0$.

A consequence of Eq.~\eqref{eqn:trivial_bound_on_f} is that $|f_{k}|$
is bounded by a constant depending only on the box size. Hence, using
Lemma~\ref{lem:fourier_transform_of_box}, the magnitude of the Fourier
transform $\overline{b_{k}}$ is small whenever \emph{one} coordinate
lies near a zero of $f_{k}$.

\begin{proposition}
For any $\epsilon > 0$, there exists $n_{0} \in \N$ such that, for all
$n \geq n_{0}$, the number of values in $\overline{b_k}$ that are
$\epsilon$-near zero,
\[
\begin{aligned}
    \# \lBc (\omega_{1}, \hdots, \omega_{d}) \in \tfrac{2\pi}{n}[n]^{d}: |\overline{b_{k}}(\omega_{1}, \hdots, \omega_{d})| \leq \epsilon \rBc
\end{aligned}
\]
is at least
$
    d(k - 1)n^{d - 1} - \tbinom{d}{2}(k - 1)^{2}n^{d-2}
$
\label{pro:nearzeros}
\end{proposition}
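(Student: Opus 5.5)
By Lemma~\ref{lem:fourier_transform_of_box} and Eq.~\eqref{eqn:trivial_bound_on_f}, it suffices to exhibit many frequency tuples for which \emph{one} coordinate makes $|f_k|$ small. Indeed, the phase factors have modulus one, so
\[
|\overline{b_k}(\omega_1,\hdots,\omega_d)| = \prod_{\ell=1}^d |f_k(\omega_\ell)| \le \min_\ell |f_k(\omega_\ell)|.
\]
We therefore need only show that the one-dimensional set $S_n = \{\omega\in\tfrac{2\pi}{n}[n] : |f_k(\omega)|\le\epsilon\}$ has at least $k-1$ elements for $n$ large.

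\textbf{Step 1 (one dimension).} The function $f_k$ is continuous on $[0,2\pi]$ and vanishes at each point of $Z_k$ from Eq.~\eqref{eqn:one_dim_zero_set}. Hence there is $\delta>0$ with $|f_k(\omega)|\le\epsilon$ whenever $|\omega - z|\le\delta$ for some $z\in Z_k$. We choose $\delta$ smaller than half the spacing $\pi/k$ between consecutive zeros, and also smaller than the distance from $Z_k$ to the endpoints $0$ and $2\pi$. With this choice the $\delta$-neighborhoods of the $k-1$ zeros are disjoint and contained in $(0,2\pi)$. Now take $n_0$ with $2\pi/n_0\le\delta$. For $n\ge n_0$, the grid $\tfrac{2\pi}{n}[n]$ has spacing at most $\delta$, so every such interval contains at least one grid point; we may take the nearest grid point to each $z$. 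These grid points are distinct, so $|S_n|\ge k-1$.

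\textbf{Step 2 (counting in $d$ dimensions).} For $\ell=1,\hdots,d$, let
\[
A_\ell = \{\omega\in\tfrac{2\pi}{n}[n]^d : \omega_\ell\in S_n\}.
\]
Each $\omega$ in $\bigcup_\ell A_\ell$ satisfies $|\overline{b_k}(\omega)|\le\epsilon$ by the product bound. By the Bonferroni inequality (inclusion–exclusion truncated at the second term),
\[
\Bigl|\bigcup_\ell A_\ell\Bigr| \ge \sum_\ell |A_\ell| - \sum_{\ell<\ell'} |A_\ell\cap A_{\ell'}|,
\]
with $|A_\ell| = |S_n|\,n^{d-1}$ and $|A_\ell\cap A_{\ell'}| = |S_n|^2 n^{d-2}$. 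This gives the bound
\[
d\,s\,n^{d-1} - \tbinom{d}{2} s^2 n^{d-2}, \qquad s=|S_n|.
\]

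\textbf{Step 3 (main obstacle: monotonicity in $s$).} The right-hand side involves $s=|S_n|$, whereas the statement has $k-1$. Since we only know $s\ge k-1$, we need the expression to be nondecreasing in $s$ on the relevant range. One clean fix is to restrict from the start to exactly $k-1$ chosen grid points, one per zero, i.e.\ define $A_\ell$ using a fixed subset $S'_n\subseteq S_n$ of size $k-1$. Then the Bonferroni bound reads exactly
\[
d(k-1)n^{d-1}-\tbinom{d}{2}(k-1)^2n^{d-2},
\]
which is the claimed estimate. No monotonicity argument is needed, and the only analytic input is the continuity argument of Step 1.
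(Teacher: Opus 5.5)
Your proposal is correct and follows the paper's proof: continuity of $f_k$ near the zeros $Z_k$, a grid fine enough to give $k-1$ distinct nearby grid points, the bound $|f_k|\le 1$ so that a single coordinate controls the product, and a count of tuples with some coordinate in that set. The only difference is cosmetic. You apply the Bonferroni inequality directly, while the paper counts exactly $n^{d}-(n-k+1)^{d}$ and then ``expands''; your version makes explicit why truncating that alternating expansion after two terms gives a valid lower bound.
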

\begin{proof}
Uniform continuity of $f_{k}$ implies there exists $\delta > 0$ such that, 
\[
    |f_{k}(\omega)| \leq \epsilon
\]
whenever there is a zero $\omega_{0} \in Z_{k}$ with $|\omega -
\omega_{0}| \leq \delta$.

Fix $n \in \N$.  Let $\theta(\omega_0)$ be a value in the discrete set
$\frac{2\pi}{n}[n]$ closest to $\omega_0$ and define $S$,
\[
    S = \{\theta(\omega_{0}): \omega_{0} \in Z_{k}\}.
\]
Choose $n_{0}$ large enough so that $\frac{2\pi}{n_{0}} \leq
\min(\delta, \frac{\pi}{k})$. Provided that $n \geq n_{0}$, the set
$S$ consists of $k - 1$ distinct frequencies, each at distance at most
$\delta$ from some zero of $f_{k}$.

Now consider any $(\omega_{1}, \hdots, \omega_{d}) \in
\frac{2\pi}{n}[n]^{d}$. If any coordinate $\omega_{j}$ lies in $S$,
then
\[
    |\overline{b_{k}}(\omega_{1}, \hdots, \omega_{d})| = \prod_{\ell = 1}^{d} |f_{k}(\omega_{\ell})| \leq |f_{k}(\omega_{j})| \leq \epsilon
\]
Therefore, the number of values in $|\overline{b_k}|$ that are at most
$\epsilon$ is no smaller than the number of discrete frequencies
$(\omega_1, \hdots, \omega_d)$ with some coordinate in $S$. The number
of choices for the latter is
\[
    n^{d} - (n - k + 1)^{d}.
\]
Expanding this polynomial yields the claim.
\end{proof}

Proposition~\ref{pro:nearzeros} shows that cyclic convolution with a
single box does not lead to a stable inverse problem. The Fourier
transform of the box is nearly zero for some frequencies, making the
reconstruction sensitive to noise and numerical roundoff errors.  This
result can be generalized to the linear system formed by combining $s$
measurements taken at different scales, whenever $s \leq d$.

\subsection{Multiscale measurements}
\label{sec:multiplebox}

Now we build on the previous results to establish that, for
$d$-dimensional signals, $d + 1$ convolutions with boxes of different
sizes can lead to an imaging system with stable pseudoinverse.  The
key idea is that when $a$ and $b$ are coprime, the zeros of the
Fourier transforms of one-dimensional boxes of size $a$ and $b$ do not
overlap.  Preventing overlap in $d$-dimensions requires $d+1$ boxes.

For the discussion below, we fix positive integers corresponding to
$d+1$ pixel sizes $\{{k_j}\}_{j=1}^{d+1}$. We shall assume that $k_{j}
\geq 2$ as otherwise the reconstruction problem is not ambiguous. Let
$T$ be the stacked map,
\[
    T(u) = (u \otimes b_{k_1}, \ldots, u \otimes b_{k_{d+1}}).
\]
The following result relates the singular values of a stacked map to
the singular values of the individual maps.

\begin{lemma}
  \label{lem:stack}
  Let $H_1,\ldots,H_s$ be cyclic shift invariant linear maps and
  $\sigma_j(\omega)$ be the singular values of $H_j$.  The singular
  values of the stacked map $H(u) = (H_1(u), \ldots, H_s(u))$ are
  \[\sigma(\omega) = \sqrt{\sum_{j=1}^s \sigma_j^2(\omega)}.\]
\end{lemma}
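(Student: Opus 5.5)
The plan is to diagonalize every $H_j$ simultaneously in the discrete Fourier basis and compute $H'H$ directly. Cyclic shift invariance means that each $H_j$ commutes with all cyclic shifts on $\R^{[n]^d}$. Consequently $H_j$ is a cyclic convolution with some kernel $h_j$, namely the image of the delta at the origin. By the convolution theorem, the unitary (normalized) DFT $F$ satisfies $F H_j F^{-1} = D_j$, where $D_j$ is the diagonal operator of multiplication by $\overline{h_j}(\omega)$. Hence $H_j = F^{-1} D_j F$. Its singular values are $|\overline{h_j}(\omega)|$, indexed by the frequencies $\omega \in \frac{2\pi}{n}[n]^d$, and this is the sense in which $\sigma_j(\omega)$ is to be read in the statement. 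I would make this indexing convention explicit first, since the claim depends on all the $\sigma_j$ sharing the same frequency labels.

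Next I would compute the Gram operator of the stacked map. For $H(u) = (H_1(u),\ldots,H_s(u))$ with the product inner product, we have $H'H = \sum_{j=1}^s H_j' H_j$. Each term satisfies $H_j'H_j = F^{-1} D_j^* D_j F$, using unitarity of $F$. The operator $D_j^*D_j$ is multiplication by $|\overline{h_j}(\omega)|^2$. Summing gives
\[
H'H = F^{-1}\,\mathrm{diag}\Big(\sum_{j=1}^s |\overline{h_j}(\omega)|^2\Big)\,F .
\]
This is a unitary diagonalization of the positive semidefinite operator $H'H$. Its eigenvalues are therefore $\sum_j \sigma_j^2(\omega)$, one per frequency. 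Since the singular values of $H$ are the square roots of the eigenvalues of $H'H$, they equal $\sqrt{\sum_j \sigma_j^2(\omega)}$, as claimed. Alternatively, the singular value decomposition can be written out explicitly: the right singular vectors are the Fourier modes $e_\omega$, and the left singular vectors are the normalized stacked vectors $(H_1e_\omega,\ldots,H_se_\omega)/\sigma(\omega)$. These are orthogonal across different $\omega$ because the Fourier modes are.

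No step here is a genuine obstacle. The only care needed is bookkeeping. First, the Fourier transform must be the unitary normalization so that singular values are exactly the moduli of the transfer functions; otherwise a constant factor appears. Second, $\sigma_j(\omega)$ must be understood as indexed by frequency rather than sorted in decreasing order, since the sorted lists of different $H_j$ would not combine this way.
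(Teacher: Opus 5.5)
Your proposal is correct and follows the same route as the paper. Both write $H'H=\sum_{j} H_j'H_j$, use that every $H_j$ is simultaneously diagonalized by the Fourier modes so that each mode is an eigenvector of $H'H$ with eigenvalue $\sum_j |\lambda_j(\omega)|^2$, and take square roots; your remarks about the unitary DFT and about indexing $\sigma_j$ by frequency rather than by sorted order make explicit conventions the paper leaves implicit.
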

\begin{proof}
Note that $H^*H = \sum_{j=1}^s H_j^* H_j$ and that all maps in the RHS
are diagonalized by the DFT, and thus share the same eigenvectors,
namely the discrete Fourier modes.

Let $v$ be a Fourier mode and $\lambda_j$ be the corresponding
eigenvalue of $H_j$. Then $\lambda_j^*$ is the corresponding
eigenvalue of $H_j^*$. As $\omega$ ranges over all frequencies, it
follows that $v(\omega)$ is an eigenvector of $H^*H$ with eigenvalue
\[
    \lambda(\omega) = \sum_{j=1}^s |\lambda_j(\omega)|^2.
\]
The result follows from the identities $\sigma(\omega) =
\sqrt{\lambda(\omega)}$ and $\sigma_j(\omega) = |\lambda_j(\omega)|$.
\end{proof}

It will be convenient to define the function
\[
    f(\omega_{1}, \hdots, \omega_{d}) = \lPa \sum_{j = 1}^{d+1} \prod_{\ell = 1}^{d} f_{k_{j}}^{2}(\omega_{\ell}) \rPa^{\frac{1}{2}}
\]
for all $(\omega_{1}, \hdots, \omega_{d}) \in [0, 2\pi]^{d}$.

By Lemmas~\ref{lem:stack} and~\ref{lem:fourier_transform_of_box}, the
singular values of $T$ are the values of $f$ at each discrete
frequency.

\begin{proposition}
If $\{k_j\}_{j = 1}^{d + 1}$ are pairwise coprime, then $f$ is
strictly positive on $[0, 2\pi]^{d}$.
\label{pro:positivity}
\end{proposition}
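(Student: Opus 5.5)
We argue by contradiction and use the pigeonhole principle on the coordinates. Since $f$ is a square root of a sum of squares, $f(\omega_1,\hdots,\omega_d) = 0$ holds exactly when every summand vanishes, i.e.\ for each $j \in \{1,\hdots,d+1\}$ we have $\prod_{\ell=1}^{d} f_{k_j}(\omega_\ell) = 0$. So suppose such a point $(\omega_1,\hdots,\omega_d) \in [0,2\pi]^d$ exists. Then for each $j$ there is an index $\ell(j) \in \{1,\hdots,d\}$ with $f_{k_j}(\omega_{\ell(j)}) = 0$. This defines a map $j \mapsto \ell(j)$ from a set of $d+1$ elements to a set of $d$ elements. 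By pigeonhole there are two distinct indices $j \neq j'$ with $\ell(j) = \ell(j') =: \ell$. Hence the single coordinate $\omega = \omega_\ell$ is a common zero of $f_{k_j}$ and $f_{k_{j'}}$.

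The problem therefore reduces to a one-dimensional claim: if $\gcd(a,b) = 1$, then $f_a$ and $f_b$ have no common zero on $[0,2\pi]$. The zero sets are known from the continuous extension and from Eq.~\eqref{eqn:one_dim_zero_set}. Since $f_k(0) = f_k(2\pi) = 1$, the zeros of $f_k$ on $[0,2\pi]$ are exactly $Z_k = \{2\pi m/k : 1 \le m \le k-1\}$, so they lie in the open interval.

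Now suppose $2\pi m/a = 2\pi m'/b$ with $1 \le m \le a-1$. Then $mb = m'a$, so $a \mid mb$. Because $\gcd(a,b)=1$, this forces $a \mid m$, which is impossible since $0 < m < a$. Thus $Z_a \cap Z_b = \emptyset$, which contradicts the previous paragraph.

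Finally, $f$ is continuous on $[0,2\pi]^d$, since each $f_k$ is continuous by the extension in the definition of the periodic sinc. So $f$ is strictly positive pointwise, and by compactness it is bounded below by a positive constant. The statement only requires pointwise positivity, but this uniform bound is the form used later.

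The only delicate point is making sure the endpoints $0$ and $2\pi$ introduce no zeros, and that is settled by the continuous extension. The pigeonhole step is where the count $d+1$ is essential, which matches the $d$-dimensional counterexample construction of Section~\ref{subsec:2D_box_packing}.
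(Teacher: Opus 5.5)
Your proof is correct and follows the paper's argument: if $f$ vanishes then every summand vanishes, pigeonhole over the $d$ coordinates yields a single coordinate that is a common zero of two of the $f_{k_j}$, and pairwise disjointness of the zero sets $Z_{k_j}$ gives the contradiction. You also prove the disjointness $Z_a \cap Z_b = \emptyset$ (via $a \mid mb \Rightarrow a \mid m$) and check the endpoints, both of which the paper simply asserts.
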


\begin{proof}
Since $f$ is non-negative and continuous, and $[0, 2\pi]^{d}$ is
compact, it will be sufficient to show that $f$ has no zeros.

Fix $(\omega_{1}, \hdots, \omega_{d}) \in [0, 2\pi]^{d}$. Suppose, for
the sake of a contradiction, that $f(\omega_{1}, \hdots, \omega_{d}) =
0$. By definition,
\[
    \sum_{j = 1}^{d+1} \lPa \prod_{\ell = 1}^{d} f_{k_{j}}^2(\omega_{\ell}) \rPa = 0.
\]
Therefore, for each $k_j$, there is at least one index $\ell_{j}$ such
that $f_{k_{j}}(\omega_{\ell_{j}}) = 0$. Now recall that the zeros of
$f_{k_{j}}$ are the sets
\begin{equation}
Z_{k_{j}} = \lBc \tfrac{2\pi}{k_{j}}m: 1 \leq m \leq k_{j} - 1 \rBc.
\label{eqn:zero_set}
\end{equation}
Since $\{k_{j}\}_{j = 1}^{d+1}$ is pairwise coprime, the zero sets
$\{Z_{k_{j}}\}_{j = 1}^{d+1}$ are pairwise disjoint. Hence each
coordinate $\omega_{\ell}$ belongs to at most one of the sets
$Z_{k_{j}}$ and in particular, the indices $\ell_{j}$ are
distinct. But this is impossible; there are $d$ coordinates and $d +
1$ indices, so at least one of the indices is repeated. This
contradiction shows that $f$ has no zeros.
\end{proof}

The previous results suffices to show that $T$ is invertible when
$\{k_j\}_{j = 1}^{d + 1}$ are pairwise coprime.  The condition number
$\kappa(T)$ measures the relative stability of the pseudoinverse, and
is defined as the ratio of the largest and smallest singular values of
$T$.  Theorem~\ref{thm:condition_number} shows how the coprime
hypothesis is fundamental for stability.

\begin{theorem}
\label{thm:condition_number}
$\;$
\begin{enumerate}
    \item If $\{k_j\}_{j=1}^{d+1}$ are pairwise coprime, \[\exists C>0 \text{ such that } \kappa(T) \leq C \text{ for all } n.\]  
    \item If $\{k_j\}_{j=1}^{d+1}$ are not pairwise coprime \[\lim_{n \rightarrow \infty} \kappa(T) = \infty.\]
\end{enumerate}
\end{theorem}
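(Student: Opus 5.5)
The plan is to read off both parts of Theorem~\ref{thm:condition_number} from the function $f$. By Lemmas~\ref{lem:stack} and~\ref{lem:fourier_transform_of_box}, the singular values of $T$ are exactly the values of $f$ on the discrete grid $\frac{2\pi}{n}[n]^{d}$, so
\[
\kappa(T) = \frac{\max_{\omega \in \frac{2\pi}{n}[n]^d} f(\omega)}{\min_{\omega \in \frac{2\pi}{n}[n]^d} f(\omega)}.
\]
The numerator is easy to control uniformly in $n$. By \eqref{eqn:trivial_bound_on_f} each product $\prod_\ell f_{k_j}^2(\omega_\ell)$ is at most $1$, so $f \le \sqrt{d+1}$ everywhere. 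Moreover, the frequency $\omega = 0$ always lies on the grid and gives $f(0) = \sqrt{d+1}$. Hence the largest singular value equals $\sqrt{d+1}$ for every $n$, and everything reduces to the behaviour of the smallest singular value.

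\textbf{Part 1.} Assume the $k_j$ are pairwise coprime. Proposition~\ref{pro:positivity} shows that $f$ is strictly positive on the compact set $[0,2\pi]^d$. Since $f$ is continuous, it attains a minimum $c > 0$ there. Every grid point lies in $[0,2\pi]^d$, so the smallest singular value is at least $c$ for every $n$. Therefore $\kappa(T) \le \sqrt{d+1}/c =: C$, and $C$ is independent of $n$.

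\textbf{Part 2.} Assume some pair, say $k_1, k_2$, satisfies $r = \gcd(k_1,k_2) \ge 2$. Then $\omega^* = 2\pi/r$ is a common zero of $f_{k_1}$ and $f_{k_2}$, since it is a nonzero multiple of both $2\pi/k_1$ and $2\pi/k_2$ lying in $(0,2\pi)$. The aim is to build a point $\omega^\star \in [0,2\pi]^d$ at which every product $\prod_\ell f_{k_j}(\omega_\ell)$ vanishes, so that $f(\omega^\star)=0$.
\begin{itemize}
\item Set the first coordinate to $\omega^*$; this kills the terms for $j = 1, 2$ at once.
\item The remaining $d-1$ indices $j = 3, \dots, d+1$ are handled one per remaining coordinate: for each such $j$, set one coordinate to $2\pi/k_j$, a zero of $f_{k_j}$ (recall $k_j \ge 2$).
\end{itemize}
This uses exactly $d$ coordinates, which is the counting that failed in Proposition~\ref{pro:positivity} and is the crux of the argument. (When $d=1$ there are only two indices and the first step alone suffices.)

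It remains to transfer the zero of $f$ to the grid. By uniform continuity, for any $\epsilon > 0$ there is $\delta > 0$ with $f(\omega) \le \epsilon$ whenever $\|\omega - \omega^\star\|_\infty \le \delta$. Once $2\pi/n \le \delta$, some grid point lies within $\delta$ of $\omega^\star$ in every coordinate. So the smallest singular value is at most $\epsilon$ for all large $n$, i.e.\ it tends to $0$. With the largest singular value fixed at $\sqrt{d+1}$, this gives $\kappa(T) \to \infty$.

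The main obstacle is this construction of a common zero in the non-coprime case. The point to check is that a single shared zero $2\pi/r$ covers two filters with one coordinate, so that $d$ coordinates suffice for $d+1$ filters. The rest is the same compactness and uniform-continuity argument already used in Propositions~\ref{pro:nearzeros} and~\ref{pro:positivity}.
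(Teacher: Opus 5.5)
Your proof is correct and follows the paper's argument. The largest singular value is $f(0)=\sqrt{d+1}$. Part 1 uses the positive minimum of $f$ guaranteed by Proposition~\ref{pro:positivity}. Part 2 builds a zero of $f$ by putting the shared zero $2\pi/r$ in one coordinate and a zero of each remaining $f_{k_j}$ in the other coordinates, and your explicit uniform-continuity step that moves this zero onto the grid just spells out what the paper states in one line.
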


\begin{proof}
Suppose first that the integers are pairwise coprime.
Using~\eqref{eqn:trivial_bound_on_f}, the largest singular value of $T$ is
\begin{align}
    f(0) = \sqrt{d + 1}.
\end{align}
The smallest singular value of $T$ is at least 
\[
    M = \min_{(\omega_{1}, \hdots, \omega_{d}) \in [0, 2\pi]^{d}} f(\omega_{1}, \hdots, \omega_{d}).
\]
For pairwise coprime box sizes, the minimum $M$ is strictly
positive. The first part of the theorem follows by taking
\[
    C = \frac{\sqrt{d + 1}}{M}.
\]

It remains to prove the second part. As the singular values are a
sampling of the continuous function $f$ on a discrete grid whose
sampling period becomes arbitrarily small, the smallest singular value
of $T$ converges $M$, which is zero if $f$ has a zero.

We show that $f$ has a zero. Consider the zero sets $Z_{k_{j}}$ of
$f_k$ in Eq.~\eqref{eqn:zero_set}. For $k_{j} \geq 2$, these zero sets
are non-empty. Without loss of generality we may assume that $k_1$ and
$k_{d+1}$ are not coprime, and $r = \gcd(k_{1}, k_{d + 1}) \geq
2$. Let
\[
    \omega_{1} = 2\pi/r \in Z_{k_{1}} \cap Z_{k_{d + 1}}
\]
and choose $\omega_{j} \in Z_{k_{j}}$ for $j \neq 1$. We can readily
verify that this choice leads to $f(\omega_{1}, \hdots, \omega_{d})=0$
\end{proof}

\subsection{Expected least squares error} \label{subsec:expected_LSE}

Finally we consider the error of the least squares reconstruction.
The quantity $\tr((T'T)^{-1})/n^d$ is of key interest, because it
determines the expected mean squared error.  The next result gives an
asymptotic formula for large signals.

\begin{theorem}
\label{thm:cyclic_trace}
If $\{k_j\}_{j = 1}^{d + 1}$ are pairwise coprime, 
\[
    \lim_{n \rightarrow \infty} \frac{\tr((T'T)^{-1})}{n^{d}} = \frac{1}{(2\pi)^{d}} \int_{[0, 2\pi]^{d}} \frac{\dx \omega_{1} \cdots \dx \omega_{d}}{f^{2}(\omega_{1}, \hdots, \omega_{d})}.
\]
\end{theorem}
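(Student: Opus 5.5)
The plan is to diagonalize $T'T$ in the Fourier basis and then recognize the normalized trace as a Riemann sum. By Lemma~\ref{lem:stack} and Lemma~\ref{lem:fourier_transform_of_box}, $T'T$ is a cyclic shift invariant operator. It is diagonalized by the discrete Fourier modes, and its eigenvalue at the frequency $(\omega_1,\hdots,\omega_d)\in\frac{2\pi}{n}[n]^d$ is $f^2(\omega_1,\hdots,\omega_d)$. The phase factors $e^{-i\frac12(k-1)\omega_\ell}$ disappear because only $|\overline{b_{k_j}}|^2$ enters. Proposition~\ref{pro:positivity} shows that every eigenvalue is positive, so $T'T$ is invertible. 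Since the trace is the sum of the eigenvalues, we obtain the exact identity
\[
\frac{\tr((T'T)^{-1})}{n^d} = \frac{1}{n^d}\sum_{\omega\in\frac{2\pi}{n}[n]^d} \frac{1}{f^2(\omega)} = \frac{1}{(2\pi)^d}\sum_{\omega\in\frac{2\pi}{n}[n]^d} \lPa\tfrac{2\pi}{n}\rPa^d \frac{1}{f^2(\omega)}.
\]
The right-hand side is a left-endpoint Riemann sum for $g=1/f^2$ over the uniform partition of $[0,2\pi]^d$ into cubes of side $2\pi/n$.

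It remains to show that this Riemann sum converges to the integral. Each $f_{k_j}$ extends continuously to $[0,2\pi]$, so $f$ is continuous on the compact cube $[0,2\pi]^d$. By Proposition~\ref{pro:positivity}, $f$ is strictly positive there, and so it is bounded below by $M=\min f>0$, the same constant used in Theorem~\ref{thm:condition_number}. Hence $g=1/f^2$ is continuous and bounded by $1/M^2$ on $[0,2\pi]^d$, and in particular it is uniformly continuous. Given $\epsilon>0$, choose $\delta$ so that $|g(x)-g(y)|\le\epsilon$ whenever $|x-y|_\infty\le\delta$. For $2\pi/n\le\delta$, the value of $g$ at the corner of each grid cube differs from $g$ at any point of that cube by at most $\epsilon$. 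Summing over the $n^d$ cubes, each of volume $(2\pi/n)^d$, the Riemann sum differs from $\int g$ by at most $\epsilon(2\pi)^d$. Dividing by $(2\pi)^d$ gives the limit claimed in the statement.

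The main point needing care is the positivity and continuity of $f$ on the \emph{closed} cube, including the boundary faces $\omega_\ell=0,2\pi$. This is where the pairwise coprime hypothesis enters. Without it, Theorem~\ref{thm:condition_number} shows that $f$ vanishes at some point, $1/f^2$ blows up there, and the sum need not converge to a finite integral. With Proposition~\ref{pro:positivity} and the continuous extension of $f_k$ from the Definition of the periodic sinc both in hand, everything else is the standard convergence of Riemann sums for a continuous function. One should also note that the discrete grid $\frac{2\pi}{n}[n]$ uses left endpoints $0,\hdots,2\pi(n-1)/n$, so each point lies in its own cube and no point is counted twice.
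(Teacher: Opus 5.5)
Your proposal is correct and follows the paper's own argument. You diagonalize $T'T$ by the DFT so that $\tr((T'T)^{-1})$ is the sum of $1/f^2$ over the frequency grid, then read the normalized sum as a Riemann sum of the continuous, strictly positive $1/f^2$; you simply spell out the uniform-continuity step that the paper leaves implicit.
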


\begin{proof}
Expanding the trace as a sum of eigenvalues gives, 
\begin{align}
    \tr((T'T)^{-1}) = \sum_{(\omega_{1}, \hdots, \omega_{d}) \in \frac{2\pi}{n}[n]^{d}} \frac{1}{f(\omega_{1}, \hdots, \omega_{d})^{2}}.
\end{align}
Dividing by $n^{d}$, the RHS of the above display becomes a Riemann
sum on $[0, 2\pi]^{d}$. Since $f$ is continuous and strictly positive,
this Riemann sum converges.
\end{proof}

\begin{figure}[b]
    \centering
    \includegraphics[width=0.8\columnwidth]{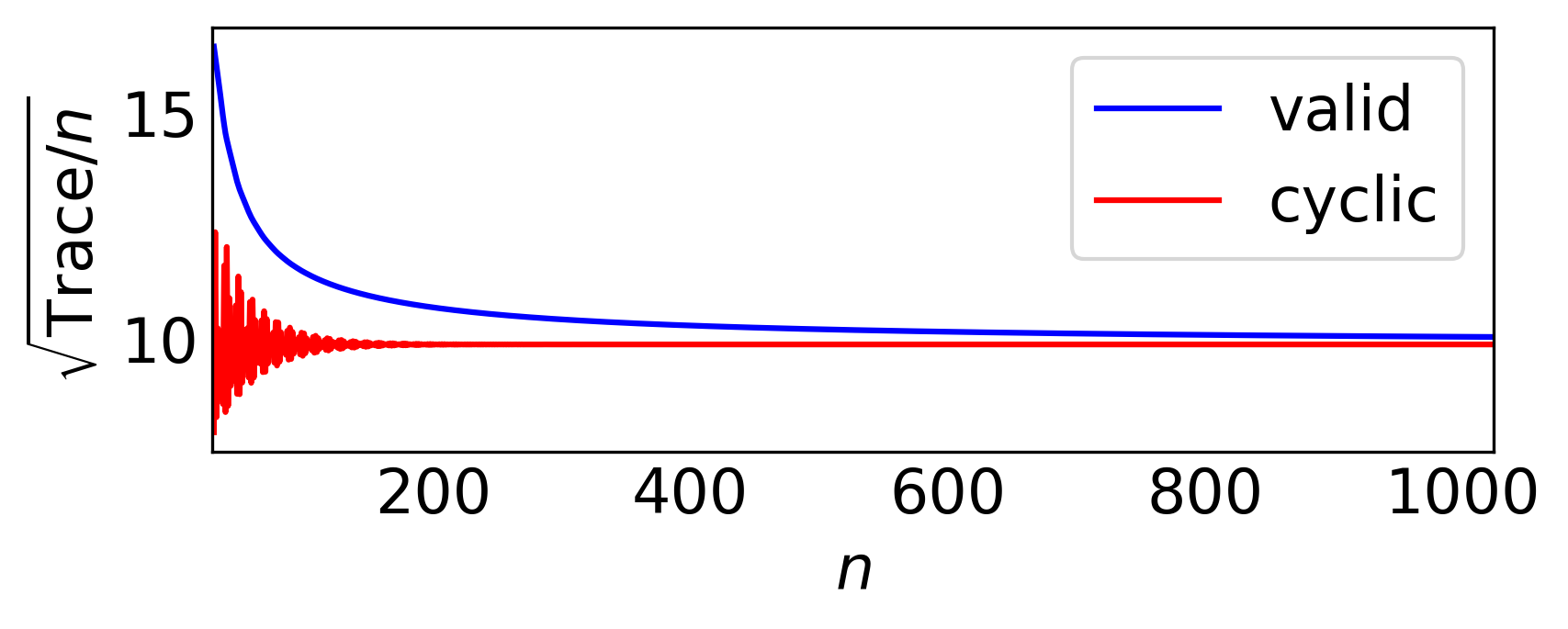}
    \caption{Plot of $(\tr((T'T)^{-1})/n)^{\frac{1}{2}}$ illustrating convergence in the one-dimensional case using box sizes $9$ and $11$.}
    \label{fig:traces}
\end{figure}

Figure~\ref{fig:traces} shows two plots of
$(\tr((T'T)^{-1})/n)^{\frac{1}{2}}$ as a function of $n$ for
one-dimensional signals with box sizes $9$ and $11$.  We plot the
square root of the normalized trace because it better corresponds to
an RMS value for a reconstruction with least squares.  We consider
maps defined via cyclic or valid convolution separately.  The trace in
each case can be computed numerically via explicit construction of an
imaging matrix.  We can see in the plots that the mean squared error
obtained with measurements defined by valid convolution approaches the
mean squared error obtained with measurements defined by cyclic
convolutions.

\subsection{Noise-resolution tradeoff}

Using the previous results we can see a fundamental tradeoff between
the reconstruction error and sensor resolution when the per-pixel
measurement noise is fixed.

Using Eq.~\ref{eqn:identity_for_f}, notice that
\[
    \int_{[0, 2\pi]} f_{k_{j}}^{2}(\omega) \, \dx{\omega} = \frac{2\pi}{k_{j}}.
\]
Letting $k = \min k_{j}$, an application of Jensen's inequality gives the lower bound, 
\[
    \frac{1}{(2\pi)^{d}} \int_{[0, 2\pi]^{d}} \frac{\dx \omega_{1} \cdots \dx \omega_{d}}{f^{2}(\omega_{1}, \hdots, \omega_{d})} \geq \frac{k^{d}}{d + 1}.
\]

As a consequence of this bound we can make the following observation.
\begin{observation}
\label{obs:tradeoff}
With pixels of size at least $k$ the expected mean squared error is at
least $k^{d}/(d+1)$ times larger than obtained using single pixel
measurements.
\end{observation}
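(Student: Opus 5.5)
The observation should follow from Theorem~\ref{thm:cyclic_trace}, Eq.~\eqref{eqn:lsqerror}, and the Jensen lower bound displayed just before the statement.

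\emph{Step 1: baseline.} With single-pixel measurements ($k=1$), $T$ is the identity, so $\tr((T'T)^{-1})/n^d = 1$. By Eq.~\eqref{eqn:lsqerror}, the expected mean squared error per pixel is then exactly $\sigma^2$.

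\emph{Step 2: multiscale error.} Let the $d+1$ pairwise coprime pixel sizes each be at least $k$. By Eq.~\eqref{eqn:lsqerror} the expected mean squared error is $\sigma^2 \tr((T'T)^{-1})/n^d$. By Theorem~\ref{thm:cyclic_trace} this converges, as $n\to\infty$, to $\sigma^2$ times $(2\pi)^{-d}\int_{[0,2\pi]^d} f^{-2}$. So it suffices to show that this integral, after normalization, is at least $k^d/(d+1)$. (For non-coprime sizes, Theorem~\ref{thm:condition_number} shows the error blows up, so the bound holds trivially in the limit.)

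\emph{Step 3: the Jensen bound.} This is the main content. Separability, Fubini, and $\int_0^{2\pi} f_{k_j}^2 = 2\pi/k_j$ give
\[
\frac{1}{(2\pi)^d}\int_{[0,2\pi]^d} f^2 = \sum_{j=1}^{d+1} k_j^{-d} \le \frac{d+1}{k^d}.
\]
Now apply Jensen's inequality for the convex function $x\mapsto 1/x$ on the probability space $[0,2\pi]^d$ with normalized Lebesgue measure. This gives $\mathbb{E}[1/f^2] \ge 1/\mathbb{E}[f^2] \ge k^d/(d+1)$.

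Two routine checks remain. The first is verifying $\int f_{k_j}^2 = 2\pi/k_j$, which is Parseval applied to the exponential sum in Eq.~\eqref{eqn:identity_for_f}: the sum has $k_j$ unit coefficients scaled by $1/k_j$. The second is noting that the noise variance $\sigma^2$ is the same per pixel in both settings, so the ratio is exactly the normalized trace. This ratio compares the multiscale error to the $\sigma^2$ baseline of Step~1, which gives the stated factor $k^d/(d+1)$.
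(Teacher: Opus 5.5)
Your proposal is correct and is the paper's own argument. Parseval gives $\int_0^{2\pi} f_{k_j}^2 = 2\pi/k_j$, separability then gives $(2\pi)^{-d}\int f^2 = \sum_j k_j^{-d} \le (d+1)/k^d$, and Jensen for $x\mapsto 1/x$, combined with Theorem~\ref{thm:cyclic_trace} and the single-pixel baseline $\sigma^2$, yields the factor $k^d/(d+1)$.

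I would drop the parenthetical on non-coprime sizes. Theorem~\ref{thm:condition_number} only gives $\kappa(T)\to\infty$, which does not force $\tr((T'T)^{-1})/n^d\to\infty$: for $d\ge 3$, $1/f^2$ is integrable near an isolated quadratic zero of $f$. In any case the discrete form of Jensen over the frequency grid gives the bound directly whenever $T$ is invertible.
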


\section{Experimental Results}
\label{sec:experiments}

\subsection{Setup}

To systematically explore multiscale reconstructions over a large
range of effective scales, we performed experiments using a CCD
camera. CCDs are a helpful testbed for super resolution, because the
photoelectrons accumulated on each pixel can be binned (further
accumulated) in hardware prior to readout. Here, we will exploit
binning to enable rapid measurement over a range of both 1D and 2D
pixel sizes. In particular, the data was acquired using a monochrome 1
megapixel (1024x1024) CCD camera (Teledyne Princeton Instruments PIXIS
1024B) with 13 $\mu$m square pixels in combination with a 25-mm, f/1.8
lens (Newyi) opened to maximum aperture and focused at approximately
0.5 meters.  This optical system ensures that the diffraction-limited
optical image resolution at the sensor plane is smaller than all pixel
sizes used in the following experiments.

For 1D experiments, rectangular bins of size $k \times 1$ were used to
accumulate sets of $k$ adjacent pixels in a single column. No binning
occurred across the rows of the sensor. For 2D experiments, square
bins of size $k \times k$ were used to accumulate sets of $k^2$
pixels. This operation occurs before the signal is amplified and
readout. Furthermore, to ensure that we are using the full dynamic
range of the sensor, we scale the exposure time for each recorded
image (by $1/k$ in 1D experiments and $1/k^2$ in 2D experiments) to
prevent saturation and ensure that the approximate range of measured
values is similar for different bin sizes. This also ensures that the
total measurement time for each bin size (i.e. including shifted
images) is approximately the same as the measurement of the entire
sensor with single-pixel bins.

Obtaining interlaced measurements that span the bins defining a valid
convolution requires capturing $k$ distinct images in the 1D case (and
likewise, $k^2$ in the 2D case) with shifted bins. For example, the
measurements for 1D bins of size $k=4$ involve origin shifts of 0, 1,
2, and 3 pixels, which for our 1024-pixel width sensors returns rows
containing 256, 255, 255, and 255 binned counts, respectively. These
1021 data points span the full range of the valid convolution: $n-k+1$
= 1021 data points. In physical terms, note that one-pixel shifts on
our Pixis 1024B CCD Camera correspond to 13 $\mu$m. Adjustments of the
binning patterns and image recording were automated using Python
scripts and the LightField\textsuperscript{\textregistered} camera
software.

Image targets were printed using a standard laser printer on white
paper. For each set of experiments, background measurements were taken
with exposure times and binning regions identical to the data, but
with the shutter closed. For completeness, flatfield calibration
measurements were also acquired for each bin size by imaging a blank
sheet of paper. To show that the results here are robust with or
without flatfield correction, the 1D results presented here are only
background correct, whereas the 2D results are both background and
flatfield corrected. All targets were placed on a static platform and
illuminated by an LED. The illumination intensity was adjusted to
ensure that the highest light levels were below the saturation range
of the 16-bit readout electronics.

\subsection{1D Results and Noise Analysis}
\label{subsec:1D_results}

\begin{figure}[b!]
\begin{subfigure}{1\columnwidth}
\includegraphics[width=\textwidth]{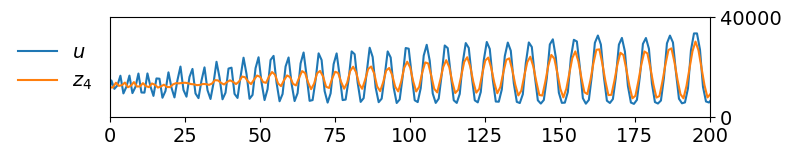}
\includegraphics[width=\textwidth]{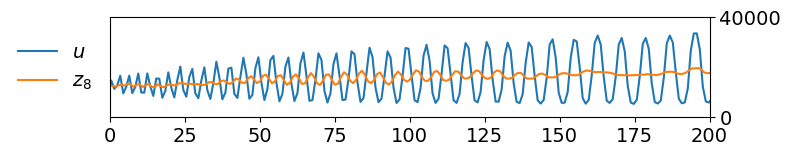}
\includegraphics[width=\textwidth]{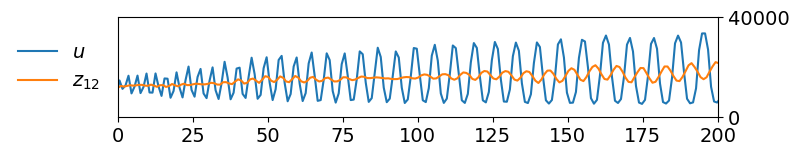}
\includegraphics[width=\textwidth]{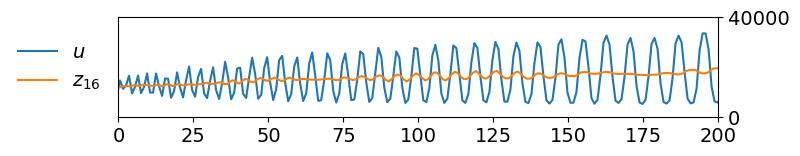}
\includegraphics[width=\textwidth]{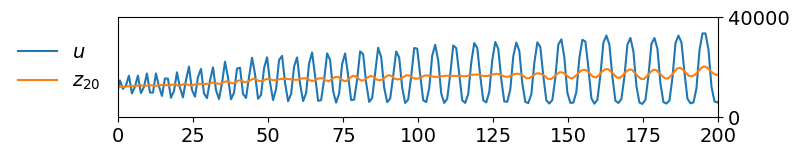}
\includegraphics[width=\textwidth]{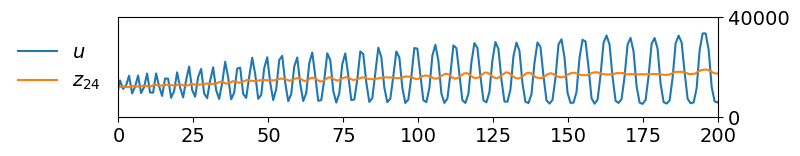}
\end{subfigure}
\caption{Experimental measurements of 1D linearly increasing grating
  for bin sizes 4, 8, 16, 20, and 24. Each binned measurement (orange
  line) is plotted alongside the single-pixel measurement (blue line)
  so that the diminishing contrast and aliasing can be directly
  observed.}
\label{fig:Grating_Box_Measurements}
\end{figure}

To perform experiments with``one-dimensional" signals, we imaged a
printed target consisting of a sequence of horizontal bars with
linearly increasing size and separation. Binning was performed along
the sensor columns, which were aligned perpendicular to the target
bars to see maximum variation. In the horizontal direction (along any
given row of the sensor), the target was approximately constant.

\begin{figure*}[t!]
\begin{subfigure}{1\columnwidth}
\caption{Bin Sizes 12 and 15 (GCD = 3)}
\includegraphics[width=\textwidth]{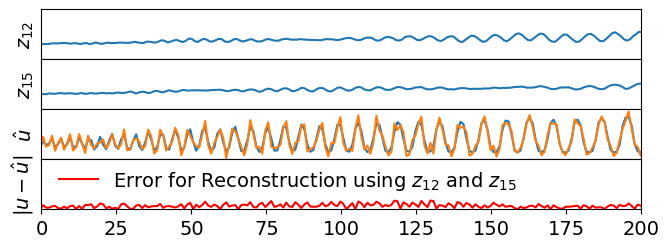}
\caption{Bin Sizes 12 and 16 (GCD = 4)}
\includegraphics[width=\textwidth]{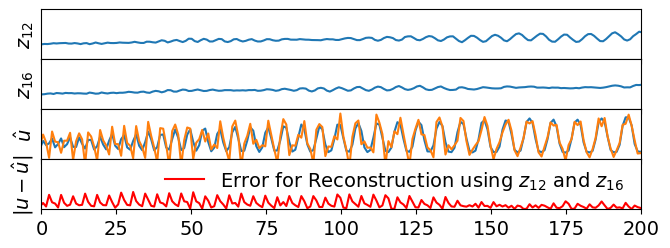}
\end{subfigure}
\begin{subfigure}{1\columnwidth}
\caption{Bin Sizes 13 and 15 (Coprime)}
\includegraphics[width=\textwidth]{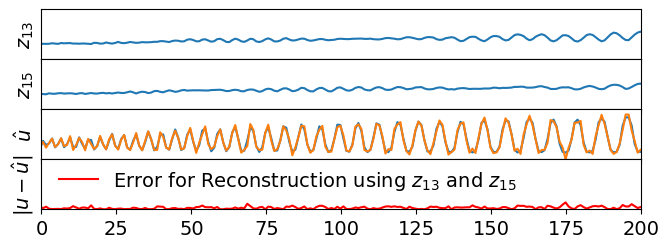}
\caption{Bin Sizes 13 and 16 (Coprime)}
\includegraphics[width=\textwidth]{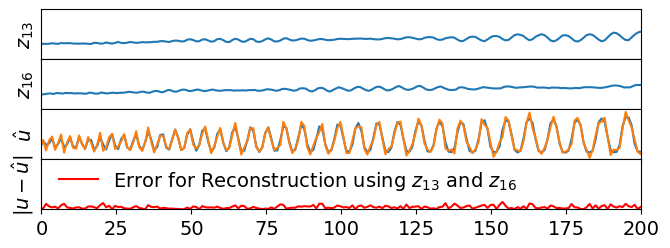}
\end{subfigure}
    \centering
    \caption{Example 1D reconstructions using adjacent pairs of bin
      sizes 12/13 (left/right) and 15/16 (top/down). When the bin
      sizes share a common factor (a and b), periodic errors
      arise. However, when bin sizes are coprime (c and d),
      reconstruction errors are low even when using a larger bin sizes
      (such as 13 in place of 12).}
    \label{fig:1d_reconstructions}
\end{figure*}

Data acquired from each sensor column could thus be treated as
independent measurement sets enabling 1024 experiments for one
acquisition. Figure~\ref{fig:Grating_Box_Measurements} shows examples
of measured data ($z_k$) with different bin sizes for one column
alongside a ground truth estimate ($u$) obtained by averaging twenty
single-pixel measurements of that same column. Note that, as the bin
size grows, the binned measurements exhibit diminishing contrast. In
addition, aliasing can be readily observed, such as in $z_8$ and
$z_{12}$ where the binned measurements are out-of-phase with $u$.

For these 1D experiments, we computed reconstructions by explicitly
constructing imaging matrices for valid convolutions and using the
\texttt{numpy} solver \texttt{numpy.linalg.lstq} which computes a
minimum norm reconstruction when the least squares solution is not
unique.

\begin{figure}
\begin{subfigure}{0.45\textwidth}
\caption{Predicted $(\E[\|\hat{u} - u\|^{2}]/n\sigma^2)^\frac{1}{2} = (\tr((T'T)^{-1})/n)^\frac{1}{2}$}
\centering
\vspace{.1cm}
\includegraphics[height=0.7\textwidth]{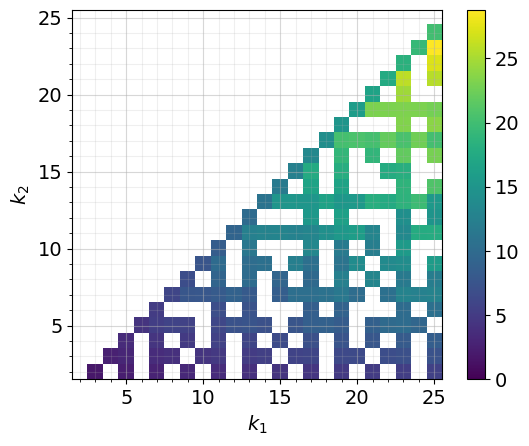}
\end{subfigure}
\begin{subfigure}{0.45\textwidth}
\caption{Experimentally Observed $\left(\overline{||\hat{u}-u||^2}/n\sigma^2\right)^{\frac{1}{2}}$}
\centering
\includegraphics[height=0.7\textwidth]{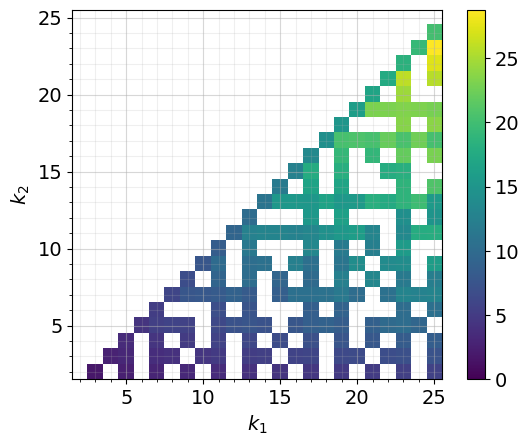}
\end{subfigure}
\caption{Comparison of (a) predicted reconstruction error under i.i.d. Noise with (b) experimentally observed error when using coprime box sizes, $k_1$ and $k_2$.}%
\label{fig:Trace_and_Error}
\end{figure}

Figure~\ref{fig:1d_reconstructions} shows representative 1D
reconstructions obtained from binned measurements. To highlight the
importance of coprime bin sizes, we consider the four reconstructions
created from two sets of adjacent sizes: 12/13 and 15/16. Note that,
when the sizes share a common factor, periodic errors arise. For
example, the reconstruction from $z_{12}$/$z_{15}$
(Figure~\ref{fig:1d_reconstructions}a) shows periodic errors that
originate from the common factor of 3. Such errors are more prominent
for the case of $z_{12}$/$z_{16}$
(Figure~\ref{fig:1d_reconstructions}b). However, when the sizes used
in the reconstruction are relatively prime, as is the case with
$z_{13}$/$z_{15}$ (Figure~\ref{fig:1d_reconstructions}c) and
$z_{13}$/$z_{16}$ (Figure~\ref{fig:1d_reconstructions}d), the original
signal can be reconstructed with low error -- even when using a larger
bin size (13 as opposed to 12) for reconstruction.

\begin{figure*}
  \captionsetup{justification=centering}
  \centering
  \begin{subfigure}{0.18\textwidth}
    \centering
    \caption{130 x 130 $\mu$m\\Pixel Images}
    \includegraphics[height=1\textwidth]{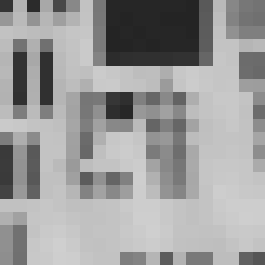}
  \end{subfigure}
  \begin{subfigure}{0.18\textwidth}
    \centering
    \caption{143 x 143 $\mu$m\\Pixel Images}
    \includegraphics[height=1\textwidth]{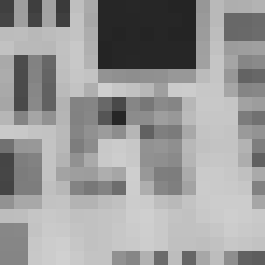}
  \end{subfigure}
  \begin{subfigure}{0.18\textwidth}
    \centering
    \caption{169 x 169 $\mu$m\\Pixel Images}
    \includegraphics[height=1\textwidth]{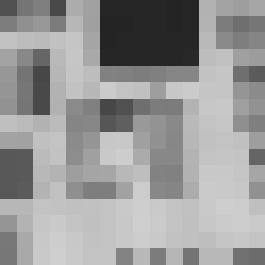}
    \end{subfigure}
  \begin{subfigure}{0.18\textwidth}
    \centering
    \caption{Multiscale Reconstructions}
    \includegraphics[height=1\textwidth]{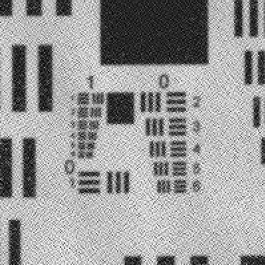}
    \end{subfigure}
    \begin{subfigure}{0.18\textwidth}
    \centering
    \caption{13 x 13 $\mu$m\\Pixel Images}
    \includegraphics[height=1\textwidth]{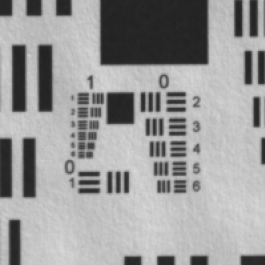}
    \end{subfigure}
 
  \vspace{0.2cm}

  \begin{subfigure}{0.18\textwidth}
    \centering
    \includegraphics[height=1\textwidth]{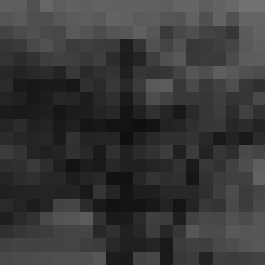} 
  \end{subfigure}
  \begin{subfigure}{0.18\textwidth}
    \centering
    \includegraphics[height=1\textwidth]{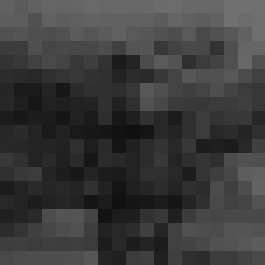} 
  \end{subfigure}
  \begin{subfigure}{0.18\textwidth}
    \centering
    \includegraphics[height=1\textwidth]{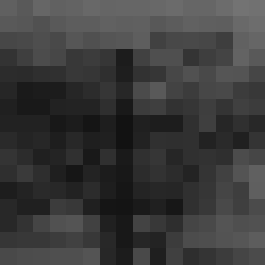} 
  \end{subfigure}
  \begin{subfigure}{0.18\textwidth}
    \centering
    \includegraphics[height=1\textwidth]{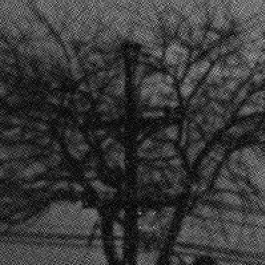}
    \end{subfigure}
    \begin{subfigure}{0.18\textwidth}
    \centering
    \includegraphics[height=1\textwidth]
    {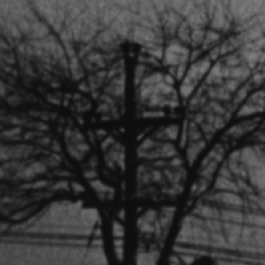}
    \end{subfigure}
  
  \vspace{0.2cm}

  \begin{subfigure}{0.18\textwidth}
    \centering
    \includegraphics[height=1\textwidth]{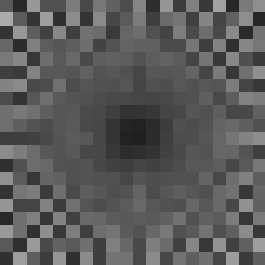} 
  \end{subfigure}
  \begin{subfigure}{0.18\textwidth}
    \centering
    \includegraphics[height=1\textwidth]{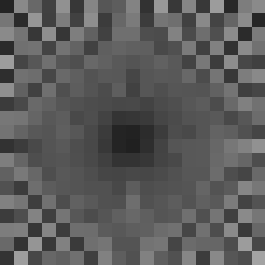} 
  \end{subfigure}
  \begin{subfigure}{0.18\textwidth}
    \centering
    \includegraphics[height=1\textwidth]{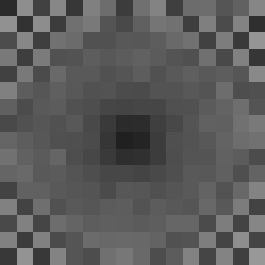} 
  \end{subfigure}
  \begin{subfigure}{0.18\textwidth}
    \centering
    \includegraphics[height=1\textwidth]{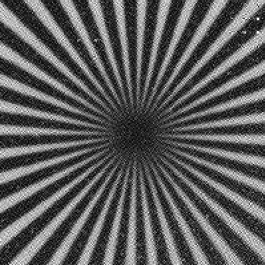}
    \end{subfigure}
    \begin{subfigure}{0.18\textwidth}
    \centering
    \includegraphics[height=1\textwidth]{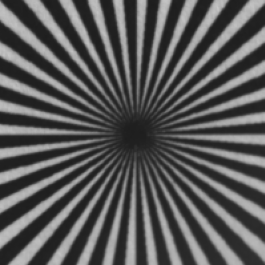}
    \end{subfigure}

\vspace{-0.2cm}
\caption{Three experimental demonstrations of mutliscale
  super-resolution using CCD binning at three image scales.}
\label{fig:2D_Reconstruction_Examples}
\end{figure*}

We also performed a systematic comparison of the empirical
reconstruction errors for coprime bin sizes with the expected errors
of the least squares estimate under i.i.d Gaussian noise. Since each
column of the image sensor represents an independent experiment, we
average the empirical squared error over the 1024 experiments, and
divide by $n$ and the single-pixel measurement variance $\sigma^2$ to
examine per-pixel effects.  Figure~\ref{fig:Trace_and_Error} shows
that the expected squared error predicted by Eq.~\ref{eqn:lsqerror}
(Figure~\ref{fig:Trace_and_Error}a) corresponds very well to the
experimentally observed error (Figure~\ref{fig:Trace_and_Error}b).  We
specifically look at the square-root of the two quantities as they
better correspond to RMS values and are in the scale of the
signal. The ratio of the two quantities plotted here is nearly uniform
(min: 0.99, max: 1.03) for all coprime bin sizes from 2 through
25. The scalebars in Figure~\ref{fig:Trace_and_Error} also serve to
highlight the inherent tradeoff between noise and resolution. Note
that the predicted and experimentally observed error scales linearly
with the resolution enhancement in 1D; that is, using data from bins
of size 10 and 11 to reconstruct single-pixel data yields a noise
level that is 10 times larger than direct measurement at single-pixel
scale. However, at the same time, larger effective pixel measurements
can be used to acquire data over a larger field of view.

\subsection{2D Results and Fourier Domain Reconstructions}

Having compared experiment and theory in the 1D case, we proceed to 2D
experiments and demonstrate examples for which a greater than 10x
resolution enhancement can be readily observed. For these experiments,
we imaged several samples including a United States Air Force
resolution target, an analog film photograph of a tree near a
powerline, and a binary pinwheel.

In 2D, three measurement scales are sufficient to solve the
super-resolution problem without an image
prior. Figure~\ref{fig:2D_Reconstruction_Examples} shows coarse images
of three samples using 10x10, 11x11 and 13x13 bins, which correspond
to effective pixel sizes of 130x130$\mu$m
(Figure~\ref{fig:2D_Reconstruction_Examples}a), 143x143$\mu$m
(Figure~\ref{fig:2D_Reconstruction_Examples}b), and 169x169$\mu$m
(Figure~\ref{fig:2D_Reconstruction_Examples}c), respectively. These
coarse images are unable to resolve any fine features in the three
samples. For the Air Force resolution target and the photograph, the
features are severely blurred, while coarse imaging of the pinwheel
leads to large checkerboard-style aliasing patterns. Nevertheless,
pixel-shifted images at these three scales enable high-resolution
reconstruction (Figure~\ref{fig:2D_Reconstruction_Examples}d) that are
comparable to direct imaging with 13x13$\mu$m single-pixel
measurements (Figure~\ref{fig:2D_Reconstruction_Examples}e).

\begin{figure}
  \captionsetup{justification=centering}
  \centering
  \begin{subfigure}{0.32\columnwidth}
    \centering
    \caption{Single Scale 117~$\mu$m Reconstructions}
    \includegraphics[height=1\textwidth]{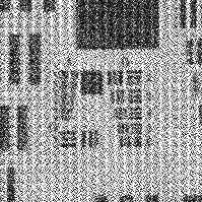}
    \vspace{-1cm}\caption*{\textcolor{black}{\bf 0.435 RMSE}}
  \end{subfigure}
  \begin{subfigure}{0.32\columnwidth}
    \centering
    \caption{Two Scale 117/130~$\mu$m Reconstructions}\includegraphics[height=1\textwidth]{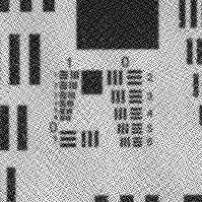}
    \vspace{-1cm}\caption*{\textcolor{black}{\bf 0.229 RMSE}}
  \end{subfigure}
  \begin{subfigure}{0.32\columnwidth}
    \centering
    \vspace{-0.5cm}\caption{Three Scale 117/130/143~$\mu$m
    Reconstructions}
    \includegraphics[height=1\textwidth]{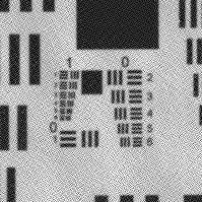} 
    \vspace{-1cm}\caption*{\textcolor{black}{\bf 0.160 RMSE}}
  \end{subfigure}

  \vspace{0.2cm}

    \begin{subfigure}{0.32\columnwidth}
    \centering
    \includegraphics[height=1\textwidth]{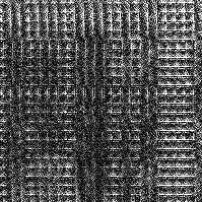} 
   \vspace{-1cm}\caption*{\textcolor{white}{\bf 0.347 RMSE}}
  \end{subfigure}
  \begin{subfigure}{0.32\columnwidth}
    \centering
    \includegraphics[height=1\textwidth]{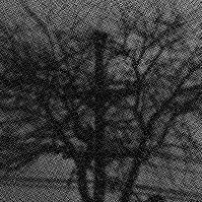} 
  \vspace{-1cm}\caption*{\textcolor{white}{\bf 0.134 RMSE}}
  \end{subfigure}
  \begin{subfigure}{0.32\columnwidth}
    \centering
    \includegraphics[height=1\textwidth]{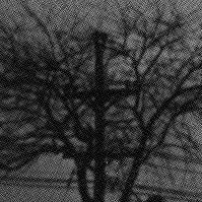}
    \vspace{-1cm}\caption*{\textcolor{white}{\bf 0.094 RMSE} }
  \end{subfigure}

  \vspace{0.2cm}

  \begin{subfigure}{0.32\columnwidth}
    \centering
    \includegraphics[height=1\textwidth]{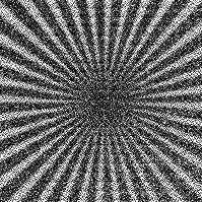} 
  \vspace{-1cm}\caption*{\textcolor{white}{\bf 0.325 RMSE}}
  \end{subfigure}
  \begin{subfigure}{0.32\columnwidth}
    \centering
    \includegraphics[height=1\textwidth]{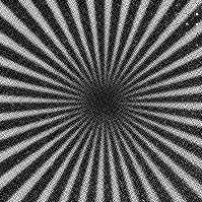} 
\vspace{-1cm}\caption*{\textcolor{white}{\bf 0.183 RMSE}}
\end{subfigure}
  \begin{subfigure}{0.32\columnwidth}
    \centering
    \includegraphics[height=1\textwidth]{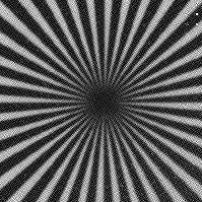} 
    \vspace{-1cm}\caption*{\textcolor{white}{\bf 0.128 RMSE}}
  \end{subfigure}
  
\vspace{-0cm}
\captionsetup{justification=raggedright}
\caption{Experimental demonstration of how using multiple imaging
  scales improves super-resolution reconstructions.}
\label{fig:2D_Multiscale_Benefits}
\end{figure}

The reconstructions shown in
Figure~\ref{fig:2D_Reconstruction_Examples}d were obtained using the
Fourier domain method described in Section~\ref{sec:FFT}, using the
two-dimensional FFT algorithm in \texttt{numpy}.  Reconstruction in
each case took less than 1 second for a 1024x1024 image on a standard
desktop computer with an Intel 2.5Ghz 16 core i5 CPU and 16GB RAM.

Using regularization in the Fourier domain, we can compare
reconstructions that use fewer than the minimum number of measurement
scales required to resolve all ambiguities.
Figure~\ref{fig:2D_Multiscale_Benefits} showcases 2D reconstructions
using one, two, and three measurement scales, all with a small
regularization factor of $\lambda = 10^{-6}$. Here, we acquired data
using 9x9, 10x10, and 11x11 bins, which correspond to effective pixel
sizes of 117x117$\mu$m, 130x130$\mu$m, and 143x143$\mu$m,
respectively. When only a single scale is used
(Figure~\ref{fig:2D_Multiscale_Benefits}a), reconstruction with
regularization allows recovery of some features in the samples, but
exhibits significant periodic errors related to blindspots of the
single box filter. The RMS error of the single-scale reconstruction
can be reduced in half using a second scale
(Figure~\ref{fig:2D_Multiscale_Benefits}b), but as described in
Section~\ref{subsec:2D_box_packing}, two box filters have common
blindspots in 2D and the result still exhibits periodic
artifacts. Finally though, by using a third scale as shown in
(Figure~\ref{fig:2D_Multiscale_Benefits}c), we are both able to
further reduce the RMS error by approximately one third, and most
importantly, suppress the periodic errors that arise from common
blindposts of any pair of box filters. By using three coprime bin
sizes, we have recovered information across all Fourier components in
the high-resolution reconstruction. Of course, there is still noise in
the three scale reconstruction shown in
Figure~\ref{fig:2D_Multiscale_Benefits}c, just as the three-scale
reconstructions in Figure~\ref{fig:2D_Reconstruction_Examples}d have
greater noise than measuring directly at the scale shown in
Figure~\ref{fig:2D_Reconstruction_Examples}e. As discussed in
Section~\ref{subsec:expected_LSE} and experimentally explored in
Section~\ref{subsec:1D_results}, the errors increase with the
resolution enhancement factor, which can be seen here in
2D. Nevertheless, we see that large resolution enhancement factors of
10x or more (e.g., as shown earlier by the 17x improvement in
Figure~\ref{fig:200micronDemo}) can be achieved.

\section{Conclusion}

We have shown that super-resolution under translation is well posed
when a sufficient number of effective resolutions are used to observe
a signal, and validated these results with both one- and
two-dimensional experiments.  The analysis presented here is quite
general and applies to problems beyond two dimensions.  For example,
the results can be applied to three-dimensional signals in
hyperspectral imaging (2D spatial + 1D spectral dimensions).

Existing cameras with pixel-shift technology can already generate the
kinds of measurements needed for super resolution under translation.
Combining pixel-shift technology with variations in optical
magnification using a zoom lens could enable wide-field
super-resolution imaging with high effective magnification. For
example, data collected with three different optical magnifications of
$(100/9)$x, $(100/10)$x and $(100/11)$x can be used to reconstruct an
image at $100$x magnification using only imaging constraints. In
addition, the resulting image has the benefit of having a much larger
field of view than would have been acquired using the same camera with
100x magnification.

Many mobile phones already include three distinct cameras designed to
record images at different magnifications, and multicamera systems
with optical zoom are common in high-end drones and robots.  Many of
these systems naturally move, and can collect data that may be
suitable for super resolution using burst photography (see,
e.g. \cite{handheld}).

There is also room for significant simplifications of the data
acquisition process used here.  Indeed, the reconstruction problem
using $d+1$ valid convolutions is significantly overdetermined.  In
one dimension, we have confirmed that if $k_1 < k_2$ are relatively
prime, then a single image captured with pixel size $k_2$ is
sufficient for super-resolution imaging without priors when combined
with interlaced data captured with pixel size $k_1$.  That is, we only
need to translate one sensor and take a single image with the other.

There is something fundamentally different between using an image
prior to fill in missing detail and reconstruction using imaging
constraints alone.  The fact that we have identified one particular
setting where multiple effective resolutions eliminate ambiguities
suggest more generally, that unstructured images, at a variety of
scales and locations, may enable invertible systems and stable
reconstructions.


\bibliographystyle{IEEEtran}
\bibliography{main}

\end{document}